\documentclass[letterpaper]{article} 
\usepackage[preprint]{aaai2027}  
\usepackage[hyphens]{url}  
\usepackage{graphicx} 
\usepackage{natbib}  
\usepackage{caption} 
\usepackage{algorithm}
\usepackage{algorithmic}

\usepackage{newfloat}

\usepackage{listings}
\DeclareCaptionStyle{ruled}{labelfont=normalfont,labelsep=colon,strut=off} 
\floatstyle{ruled}
\newfloat{listing}{tb}{lst}{}
\floatname{listing}{Listing}
\usepackage{multirow} 
\usepackage{amsmath}
\usepackage{amssymb}
\usepackage{mathtools}
\usepackage{amsthm}

\usepackage[table]{xcolor} 
\usepackage{booktabs}
\usepackage{colortbl}
\usepackage{tcolorbox}
\tcbuselibrary{breakable}
\usepackage[hybrid]{markdown} 
\usepackage{subcaption}
\theoremstyle{plain}
\newtheorem{theorem}{Theorem}

\newtheorem{lemma}[theorem]{Lemma}
\newtheorem{corollary}[theorem]{Corollary}
\theoremstyle{definition}
\newtheorem{definition}[theorem]{Definition}
\newtheorem{assumption}[theorem]{Assumption}
\theoremstyle{remark}

\usepackage{booktabs}

\title{Scaling Representation Diversity: Modulated Attention and Reconstructive Regularization for Visual Grounding}
\author{
    Written by AAAI Press Staff\textsuperscript{\rm 1}\thanks{With help from the AAAI Publications Committee.}\\
    AAAI Style Contributions by Peter Patel Schneider,
    Sunil Issar,\\
    J. Scott Penberthy,
    George Ferguson,
    Hans Guesgen,
    Francisco Cruz\equalcontrib\corresponding,
    Marc Pujol-Gonzalez\equalcontrib\corresponding
}
\affiliations{
    \textsuperscript{\rm 1}Association for the Advancement of Artificial Intelligence\\

    1101 Pennsylvania Ave, NW Suite 300\\
    Washington, DC 20004 USA\\
    proceedings-questions@aaai.org
}

\title{Scaling Representation Diversity: Modulated Attention and Reconstructive Regularization for Visual Grounding}
\author {
    Junyi Hu\textsuperscript{\rm 1}
    Tian Bai\textsuperscript{\rm 2}
    Fengyi Wu\textsuperscript{\rm 2}
    Yian Huang\textsuperscript{\rm 2}
    Wei Wen\textsuperscript{\rm 3}\\
    Zaoli Li\textsuperscript{\rm 3}
    Junli Lin\textsuperscript{\rm 4}
    Xingchen Li\textsuperscript{\rm 5}
    Zhenming Peng\textsuperscript{\rm 2}
    Yi Zhang\textsuperscript{\rm 1}\corresponding
}
\affiliations {
    \textsuperscript{\rm 1}Department of Automation, Tsinghua University\\
    \textsuperscript{\rm 2}School of Information and Communication Engineering, University of Electronic Science and Technology of China\\
    \textsuperscript{\rm 3}Chinalco Digital Intelligence (Chengdu) Technology Co., Ltd.\\
    \textsuperscript{\rm 4}Linsulabs\,\,\, \textsuperscript{\rm 5}PetroChina Southwest Oil and Gas Field Company, CNPC
}

\begin{document}

\maketitle

\begin{abstract}
Referring Expression Comprehension (REC) is commonly studied under dataset-specific fine-tuning, resulting in specialist models with limited cross-dataset generalization. In this work, we revisit REC from the perspective of unified open-vocabulary grounding and identify representation degeneration as a key obstacle to scaling a single generalist model. To preserve \textbf{representation diversity}, we propose a holistic data-model co-design framework. Architecturally, we introduce the Modulated Attention-Contrastive Head (mACH) for efficient token-level vision-language alignment and a text-conditioned JEPA auxiliary stream that provides complementary gradient support to preserve alignment-active representations without inference overhead. On the data side, we introduce \texttt{Objects365-Caption}, enriching Objects365 with context-aware referring expressions for large-scale language supervision. We further provide a theoretical analysis showing that complementary gradient subspaces preserve alignment capacity and thereby scale representation diversity. Extensive experiments demonstrate that our single-checkpoint framework achieves highly competitive performance on standard REC benchmarks while exhibiting strong generalization across heterogeneous grounding datasets without benchmark-specific adaptation.
\end{abstract}
\begin{links}
    \link{Code}{https://github.com/inlmouse/MACH}
    \link{Datasets}{https://huggingface.co/datasets/EndlessnessSoul/Objects365_captions}
\end{links}

\section{Introduction}

Referring Expression Comprehension (REC) aims to localize an object in an image according to an open-ended natural language query \cite{yu2016modeling}. Existing REC methods can be broadly categorized into three computational paradigms. 
\emph{Text-conditioned spatial regression} methods repeatedly fuse language and visual features inside the backbone, leading to query-dependent computation \cite{yu2018mattnet,deng2021transvg,kamath2021mdetr}. 
\emph{Region-word dot alignment} methods instead project visual regions and text embeddings into a shared space, enabling efficient parallel grounding through matrix similarity \cite{li2022grounded,liu2024grounding,wang2022generalizing}. 
Finally, \emph{autoregressive multimodal LLMs} formulate grounding as language generation, offering remarkable reasoning ability at the cost of substantially higher inference latency \cite{chen2023shikra}.



Among these paradigms, region-word dot alignment offers an attractive balance between text-conditioned spatial regression and autoregressive MLLMs, making it well suited for resource-constrained devices and as a proposal generator for LLM-based pipelines. We therefore adopt this paradigm as our base architecture. However, scaling it to unified multi-dataset training exposes two fundamental bottlenecks: a \emph{model-level representation degeneration} where contrastive objectives induce low-rank polysemantic bottlenecks \cite{chaudhuri2025closer}, and a \emph{data-level linguistic deficit} arising from discrete category annotations. 
\begin{figure}[t]
    \centering
    \includegraphics[width=1\linewidth]{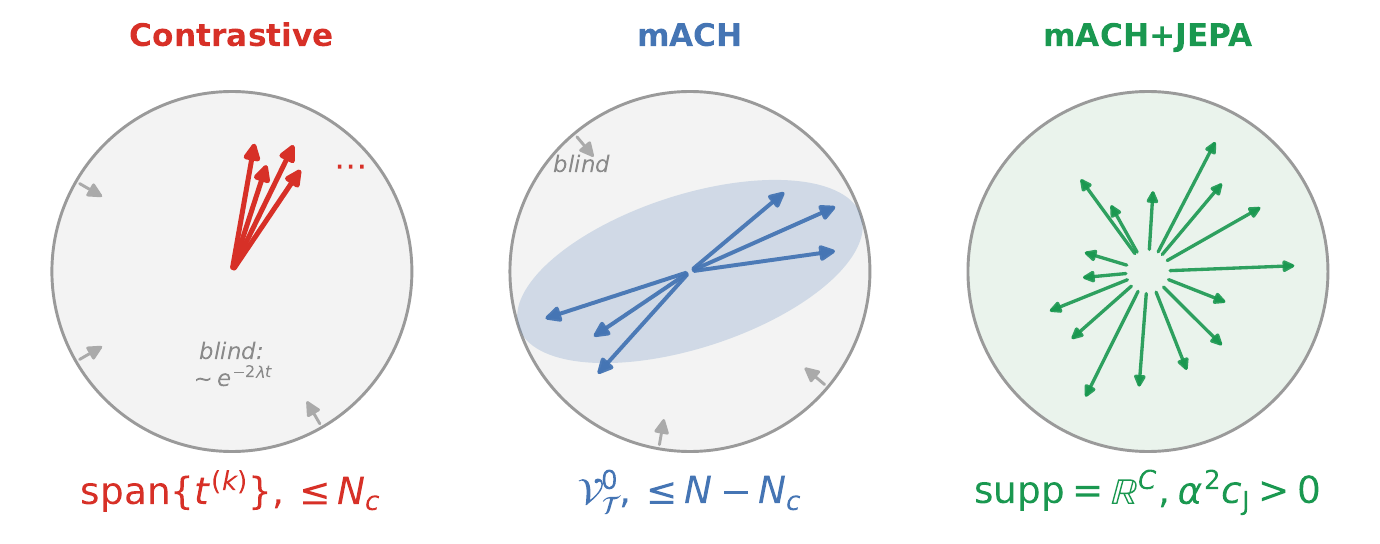}
    \caption{\textbf{Conceptual illustration of representation diversity.} Different training objectives provide supervision over different subsets of the shared visual representation. Conventional contrastive learning mainly optimizes a limited set of alignment directions, while the proposed mACH expands supervision to finer token-level interactions. The auxiliary JEPA objective further supplies complementary learning signals beyond language supervision, encouraging diverse and robust visual representations. Together, the dual-stream objective preserves a richer set of discriminative directions for unified open-vocabulary grounding.}
    \label{fig:capacity}
\end{figure}

In this work, we use \textbf{representation diversity} as the number of independent alignment-active directions preserved in the learned visual representation. As conceptually illustrated in Fig.~\ref{fig:capacity}, different training objectives preserve different subsets of the shared visual representation. Increasing representation diversity therefore requires complementary supervision that progressively expands the alignment-active subspace. To resolve these dual bottlenecks, we propose a holistic data-model co-design framework. Architecturally, we integrate the \textbf{Modulated Attention-Contrastive Head (mACH)} with an inference-free \textbf{Joint Embedding Predictive Architecture (JEPA)} auxiliary stream \cite{lecun2022path}, which complements language-conditioned gradients with visual predictive supervision to preserve representation diversity. On the data front, we construct \textbf{Objects365-Caption} (\texttt{O365-Caption}), enriching discrete object labels with context-aware descriptions to provide the linguistic diversity required for robust open-vocabulary grounding.

Our goal is to learn a unified grounding model that generalizes across heterogeneous referring expression distributions, rather than optimizing for individual benchmarks. To this end, our contributions are threefold:
\begin{itemize}
    \item We propose a data-model co-design comprising the Modulated Attention-Contrastive Head (mACH) and an inference-free JEPA auxiliary objective. We further show theoretically that the two objectives activate complementary gradient subspaces, preserving alignment capacity and scaling representation diversity.
    \item We introduce \textbf{O365-Caption}, enriching discrete detection labels with dense natural language descriptions to enhance linguistic-geometric grounding.
    \item Extensive experiments demonstrate that our unified model achieves competitive performance across standard REC benchmarks using a single static checkpoint, showing superior robustness in out-of-distribution evaluations.
\end{itemize}

\section{Related Work}

\noindent \textbf{Visual Grounding Paradigms:} 
Visual grounding has evolved from benchmark-specific specialists using cross-attention regression \cite{yu2018mattnet, deng2021transvg, xiao2024hivg} to unified open-vocabulary architectures. Modern generative MLLMs \cite{chen2023shikra, you2024ferret} offer strong open-domain reasoning via coordinate token generation, yet their high decoding latency hampers real-time edge deployment. Alternatively, discriminative region-word dot alignment models \cite{kamath2021mdetr, li2022grounded, liu2024grounding} achieve superior efficiency by decoupling visual proposal extraction from parallel text matching. However, these frameworks often rely on dataset-specific fine-tuning or purely discriminative objectives, limiting single-weight unified generalization.

\noindent \textbf{Representation Degeneration \& Predictive Regularization:} 
Discriminative and contrastive learning objectives tend to compress feature variance into low-rank, anisotropic subspaces, leading to representation collapse \cite{papyan2020prevalence, jing2021understanding, liang2022mind}. Under large-scale multimodal supervision, this degeneration severely impairs out-of-distribution generalization \cite{chaudhuri2025closer}. Although JEPA and its variants~\cite{lecun2022path, assran2023self, bardes2023v} provide non-contrastive latent prediction that preserves semantic variance, their application is largely restricted to foundation model pre-training \cite{chen2025vl, huang2026text}. Utilizing JEPA as an auxiliary regularizer to expand gradient span and prevent degeneration in unified grounding remains unexplored.

\noindent \textbf{Grounding Corpora \& Data-Model Co-Design:} 
Existing grounding benchmarks inherently trade off scale, annotation reliability, and linguistic richness \cite{krishna2017visual, peng2024grounding, rasheed2024glamm}. Large-scale detection datasets like Objects365 \cite{shao2019objects365} provide vast spatial supervision but suffer from discrete category labels. To resolve this data-level deficit, we curate \texttt{O365-Caption}, converting Objects365 category labels into dense, context-aware descriptions. This supplies the linguistic diversity required for robust open-vocabulary alignment within a scalable and accessible pipeline.

\begin{figure*}[!t]
\centering
\includegraphics[width=1\textwidth]{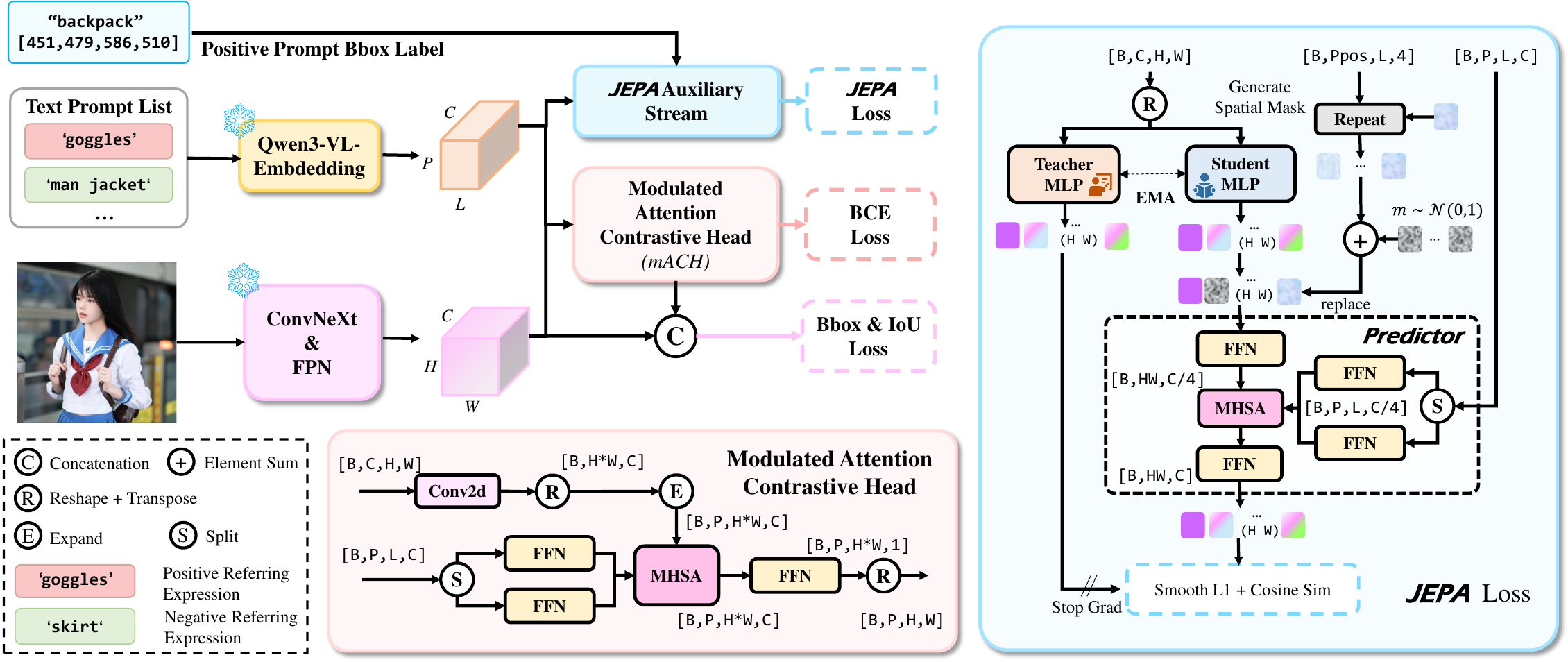}
\caption{Overview of the proposed framework (illustrated using a CNN-based detector). The visual backbone extracts visual features, while the language encoder produces text embeddings from referring expressions. The proposed Modulated Attention-Contrastive Head (mACH) performs token-level cross-attention to generate grounding predictions supervised by classification and localization losses. In parallel, a text-conditioned JEPA auxiliary branch reconstructs masked visual features under language guidance using an EMA teacher-student framework. Both branches operate on the same visual representations during training, allowing reconstructive regularization to complement discriminative grounding without introducing additional inference cost.}
\label{fig:architecture}
\end{figure*}

\section{Methodology}

The overall architecture of our framework is illustrated in Fig.~\ref{fig:architecture}. Given an input image and a set of referring expressions, the vision backbone extracts multi-scale visual features, while the language encoder produces the corresponding textual embeddings. Unlike conventional grounding frameworks that optimize only discriminative alignment, our method jointly supervises the same visual features with two complementary objectives during training: (1) a discriminative \textbf{Modulated Attention-Contrastive Head (mACH)} for open-vocabulary grounding, and (2) a reconstructive \textbf{JEPA auxiliary stream} that regularizes the shared visual representation. Since the JEPA branch is removed after training, it introduces no additional inference cost.

\subsection{Modulated Attention-Contrastive Head (mACH)}

To efficiently align visual features with multiple referring expressions concurrently, we adopt a lightweight, broadcast-based cross-attention head, termed mACH. The proposed mACH is not intended as a fundamentally new attention operator. Instead, it serves as an efficient implementation of standard cross-attention that reformulates query-text interaction into a broadcasted computation topology for unified open-vocabulary grounding. This design allows a single visual forward pass across the backbone and neck to simultaneously interact with an arbitrary number of linguistic candidates, drastically accelerating multi-query inference while remaining fully compatible with conventional legacy detector architectures.

Let
$X \in \mathbb{R}^{B \times M \times C}$
denote the flattened visual feature map extracted from one feature level of the detector, where $B$ is the image batch size, $M$ is the number of spatial locations, and $C$ is the feature dimension. Let
$W \in \mathbb{R}^{B_{nc} \times L \times C}$
denote the text embedding sequence, where $L$ is the token length and
$B_{nc}=B\times N_c$
corresponds to the expanded batch containing $N_c$ referring expressions for each image.

To avoid redundant visual computation, the visual features are broadcast along the batch dimension,
\begin{equation}
Q=\mathrm{Broadcast}(X)
\in
\mathbb{R}^{B_{nc}\times M\times C},
\label{eq:broadcast}
\end{equation}
such that each visual feature map is paired with every referring expression. Meanwhile, the textual embeddings are linearly projected to generate the corresponding keys and values $K,V=\mathrm{Linear}(W).$

Cross-modal interaction is then established through scaled dot-product attention,
\begin{equation}
O
=
\mathrm{Softmax}
\left(QK^{\top}/\sqrt{C}
\right)
V,
\label{eq:mach_attn}
\end{equation}
where $O\in\mathbb{R}^{B_{nc}\times M\times C}$ denotes the aligned visual representation. In practice, mACH is implemented using the standard multi-head attention formulation together with the variable-length implementation of FlashAttention-2 \cite{dao2023flashattention2} to efficiently eliminate computation on padded text tokens.

Finally, the aligned features are projected to grounding logits through a lightweight prediction layer. The final grounding score is computed as
\begin{equation}
S
=
\psi(O)\cdot\exp(\tau)+b,
\label{eq:final_score}
\end{equation}
where $\psi(\cdot)$ denotes a lightweight grounding head that maps the language-conditioned feature
$O\in\mathbb{R}^{B_{nc}\times M\times C}$
to grounding logits
$\psi(O)\in\mathbb{R}^{B_{nc}\times M}$. $\tau$ is a learnable logit scale, and $b$ is a learnable bias. The resulting score map is optimized using the standard binary cross-entropy objective,
\begin{equation}
\mathcal{L}_{\mathrm{mACH}}
=
\mathcal{L}_{\mathrm{BCE}}(S,Y),
\label{eq:mach_loss}
\end{equation}
where $Y$ denotes the ground-truth grounding map. Although the above description is based on our CNN implementation for clarity, the proposed mACH head is architecture-agnostic and can be readily integrated into transformer-based grounding frameworks with only minor modifications. The detailed implementation is presented in Supp. Mat. A \emph{Fig.~1}.

\subsection{JEPA Auxiliary Stream}

Although mACH provides strong discriminative supervision, optimizing only the grounding objective may gradually reduce feature diversity under large-scale heterogeneous language supervision. To alleviate this issue, we introduce a JEPA auxiliary branch that regularizes the same visual features during training through latent feature prediction. Since the branch is discarded after training, it incurs no inference overhead.

Given the visual feature map $X$, we construct an asymmetric online-target architecture consisting of a student projection head
$\mathcal{P}_{\theta}$
and an EMA teacher
$\mathcal{P}_{\mathrm{EMA}}$.
Both projection heads share the same architecture composed of lightweight
$1\times1$
convolutions,
GroupNorm,
and GELU activations.
The teacher parameters are updated using exponential moving average,
\begin{equation}
\mathcal{P}_{\mathrm{EMA}}^{(t+1)}
=
\lambda_{\mathrm{ema}}
\mathcal{P}_{\mathrm{EMA}}^{(t)}
+
(1-\lambda_{\mathrm{ema}})
\mathcal{P}_{\theta}^{(t)}.
\label{eq:ema_update}
\end{equation}

The student and teacher generate latent representations
\begin{equation}
Z_{\mathrm{stu}},
Z_{\mathrm{teach}}
\in
\mathbb{R}^{B\times C\times M},
\end{equation}
respectively.
During training, spatial regions corresponding to the ground-truth bounding boxes are randomly masked.
The masked student features are replaced with a learnable mask token,
forming
$Z_{\mathrm{stu}}^{\mathrm{masked}}$.

\begin{figure*}[!t]
\centering
\includegraphics[width=1\textwidth]{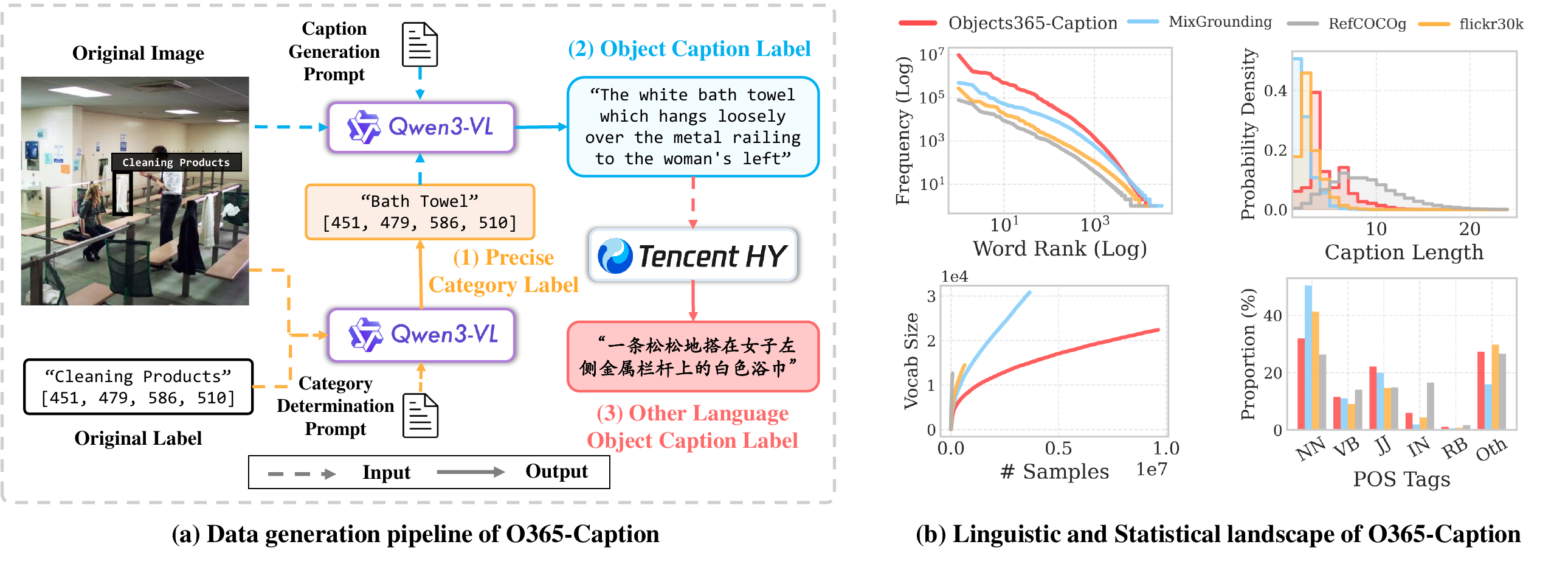}
\caption{(a) The detailed three-stage data generation pipeline of O365-Caption and (b) Comprehensive linguistic and statistical landscape of O365-Caption compared with prominent baselines. Zoom in for best view.}
\label{fig:dataset}
\end{figure*}
Unlike conventional I-JEPA, our predictor additionally receives the language embedding as contextual guidance, encouraging the masked visual regions to recover semantics relevant to the referring expression,
\begin{equation}
\hat{Z}_{\Omega}
=
\mathcal{F}_{\phi}
\left(
Z_{\mathrm{stu}}^{\mathrm{masked}},
W
\right)_m,
\quad
m\in\Omega,
\label{eq:jepa_prediction}
\end{equation}
where
$\Omega$
denotes the masked spatial locations.

The reconstruction objective is defined on normalized latent features,
\begin{equation}
\label{eq:jepa_loss}
\begin{aligned}
    \mathcal{L}_{\mathrm{JEPA}}
    & =
    \frac{1}{|\Omega|}
    \sum_{m\in\Omega}
    \left(
    1-
    \langle
    \bar{\hat z}_m,
    \bar z_{\mathrm{target},m}
    \rangle
    \right)\\
    & +
    \frac{\beta}{|\Omega|}
    \sum_{m\in\Omega}
    \mathrm{SmoothL1}
    (
    \bar{\hat z}_m,
    \bar z_{\mathrm{target},m}
    ),
\end{aligned}
\end{equation}
where
$\bar z$
denotes $\ell_2$-normalized latent features.

Finally, the overall training objective combines discriminative grounding supervision and latent reconstructive regularization,
\begin{equation}
\mathcal{L}_{\mathrm{Total}}
=
\mathcal{L}_{\mathrm{mACH}}
+
\alpha
\mathcal{L}_{\mathrm{JEPA}},
\label{eq:total_loss}
\end{equation}
where
$\alpha$
controls the weight of the auxiliary objective.
By jointly optimizing both losses on the same visual representation, the JEPA branch complements discriminative alignment with reconstructive supervision, encouraging richer feature diversity and improving the robustness of unified open-vocabulary grounding.

\subsection{O365-Caption}

To fully unlock the capacity of our alignment pipeline and anchor robust open-vocabulary scaling, we introduce \texttt{O365-Caption}. Existing unified multi-dataset training is frequently bottlenecked by a structural linguistic trade-off: massive detection corpora like Objects365 \cite{shao2019objects365} are restricted to rigid, discrete tags, whereas most REC datasets offer rich expressions but lack scale. \texttt{O365-Caption} bridges this gap by systematically upgrading discrete category labels into dense, context-aware grounding expressions, injecting essential linguistic compositionality into unified pre-training.
\begin{table}[t]
\centering
\caption{\textbf{Comprehensive statistics and linguistic diversity comparisons of open-vocabulary visual grounding and detection corpora.} We compare our curated \texttt{O365-Caption} against dominant detection, phrase grounding, and referring expression comprehension (REC) benchmarks. $\mathcal{P}_{txt}$ denotes the unique phrase size, UCR represents the Unique Caption Ratio ($\frac{\text{\# Unique Captions}}{\text{\# Total Captions}} \times 100\%$), reflecting the text-level compositionality and freedom from repetitive collapsed patterns.}
\label{tab:dataset_stats}
\small 
\setlength{\tabcolsep}{1.6pt}
\begin{tabular}{lccccc}
\toprule
\textbf{Dataset} & \textbf{\# Images} & \textbf{\# Annos} & \textbf{Avg. Len.} & \textbf{$\mathcal{P}_{txt}$} & \textbf{UCR (\%)} \\
\midrule
 \multicolumn{6}{l}{\emph{Object Detection Corpora (Discrete Categories)}} \\
COCO & 118K & 860K & 1.0 & 80 & $\sim$ 0.0 \\
Objects365 & 638K & 9.6M & 1.0 & 365 & $\sim$ 0.0 \\
\midrule
 \multicolumn{6}{l}{\emph{Phrase Grounding \& Mixed Corpora (VLM Pre-training)}} \\
Flickr30K & 31K & 638K & 2.4 & 94K & 14.7 \\
MixedGrounding & 614K & 3.7M & 1.8 & 386K & 10.5 \\
\midrule
 \multicolumn{6}{l}{\emph{Referring Expression Comprehension (REC Training Set)}} \\
RefCOCO & 20K & 121K & 3.5 & 68K & 56.0 \\
RefCOCO+ & 20K & 120K & 3.5 & 77K & \textbf{95.9} \\
RefCOCOg & 26K & 80K & \textbf{8.3} & 77K & 64.3 \\
\midrule
\textbf{O365-Caption} & \textbf{638K} & \textbf{9.6M} & 4.2 & \textbf{597K} & 6.2 \\
\bottomrule
\end{tabular}
\end{table}

As illustrated in \emph{Fig.~\ref{fig:dataset}(a)}, the dataset is constructed via an automated, three-stage generative pipeline. First, \textbf{Coarse-to-Fine Disambiguation} utilizes a lightweight MLLM (Qwen3-VL-2B) to refine generalized tags into precise taxonomies, safeguarded by a spatial coverage threshold ($\gamma < 0.05\%$) to prevent small-object hallucinations. Second, \textbf{Context-Aware Captioning} employs a powerful 32B MLLM to synthesize descriptive expressions that fuse the refined category with fine-grained visual attributes and spatial dynamics. Finally, \textbf{Cross-Lingual Extension} leverages machine translation to support multilingual evaluation beyond English-centric paradigms \cite{nogueira2025comprehension}. Despite the massive generation scale, random manual verification demonstrates an error rate strictly below 0.1\%. Exhaustive prompt templates, bounding-box guardrails, and filtering heuristics are deferred to the Supp. Mat. C.

\begin{table*}[!htbp]
\centering
  \caption{\textbf{Performance on standard referring expression comprehension (REC) benchmarks.} Results are reported as Top-1 accuracy (\%) on RefCOCO/+/g. Training datasets are abbreviated as follows: \textbf{O365}: Objects365; \textbf{OG}: Objects365 + GoldG; \textbf{GoldG-f}: GoldG after annotation refinement (deferred to the Supp. Mat.~D); \textbf{O365-C}: our proposed Objects365-Caption; \textbf{RefC}: training set of RefCOCO, RefCOCO+, and RefCOCOg~\cite{yu2016modeling}. \textbf{gRefC}: training set of gRefCOCO\cite{wang2025hierarchical}. * indicates estimated from the corresponding paper.}
  \small 
    \setlength{\tabcolsep}{3pt}
      \begin{tabular}{l|c|c|c|c|cccccccc}
        \toprule
        \multirow{2}{*}{\textbf{Method}} & \multirow{2}{*}{\textbf{Pre-Train Data}} & \multirow{2}{*}{\textbf{Input Size}} & \multirow{2}{*}{\textbf{\#Params}} & \multirow{2}{*}{\textbf{FT}} & \multicolumn{3}{c}{\textbf{RefCOCO}} & \multicolumn{3}{c}{\textbf{RefCOCO+}} & \multicolumn{2}{c}{\textbf{RefCOCOg}}  \\
        & & & & & val & testA & testB & val & testA & testB & val & test \\
        \midrule
        GLIP-T~\cite{li2022grounded} & OG & 800×1333 & 232M & N & 50.0 & 54.7 & 43.1 & 49.0 & 53.4 & 43.4 & 65.6 & 66.0\\
        GDINO-T~\cite{liu2024grounding} & OG, RefC & 800×1333 & 172M &  N & 74.0 & 74.9 & 59.3 & 66.8 & 69.9 & 56.1 & 71.1 & 72.1  \\
        PBREC-MT~\cite{zhao2024rethinking} & RefC, ReferItGame & 640x640 & 205M* &  N & 71.4 & 73.2 & 70.1 & 63.8 & 67.1 & 56.6 & 62.9 & 62.6 \\
        ExpAlign~\cite{hu2026expalign} & OG, RefC & 640x640 & 60M &  N & 51.6 & 59.3 & 47.7 & 48.9 & 47.5 & 45.5 & 65.6 & 64.0  \\
        \midrule
        LISA++-L2~\cite{yang2023lisa++} & N/A & 1024×1024 & 13B &  N & 85.9  & 88.8 &  81.7 & 74.5 & 80.6 &  68.3 & 80.1 & 81.3\\
        GSVA~\cite{xia2024gsva} & N/A & 1024×1024 & 7B &  N & 86.3 & 89.2 & 83.8 & 72.8 & 78.8 & 68.0 & 81.6 & 81.8\\
        Ours & GoldG-f, O365-C & 640x640 & 75M &  N & 85.3 & 89.0 & 82.5 & 71.8 & 78.2 & 62.7 & 76.3 &  75.8 \\
        \midrule
        MTKREC\cite{mi2024open} & novel categories RefC & 640x640 & 200M* &  Y & 81.1 & 86.8 & 73.9 & 75.0 & 80.8 & 65.5 & 74.6 &  74.7 \\
        HieA2G\cite{wang2025hierarchical} & RefC, Flickr30K, gRefC & 640x640 & 350M* &  Y & 87.8 & 90.3 & 84.0 & 80.7 & 85.6 & 72.9 & 83.7 & 83.8 \\
        GDINO-T & OG, RefC & 800×1333 & 172M & Y & 89.2 & 91.9 & 86.0 & 81.1 & 87.4 & 74.7 & 84.2 & 84.9  \\
        PropVG~\cite{dai2025propvg} & N/A & 800×1333 & 490M & Y & 89.0 &  91.6 & 85.7 & 83.7 & 88.0 & 76.6 & 83.5 & 84.4  \\
        Ours & GoldG-f, RefC, O365-C & 640x640 & 75M &  Y & 91.7 & 93.0 & 90.2 & 83.5 & 87.5 & 76.9 & 85.1 & 86.0 \\
        \bottomrule
      \end{tabular}
\label{tab:refcoco}
\end{table*}

This curation replaces legacy labels with 9.6M open-vocabulary descriptions across 638K images. As analyzed in \emph{Table~\ref{tab:dataset_stats}} and \emph{Fig.~\ref{fig:dataset}(b)}, \texttt{O365-Caption} exhibits superior statistical properties compared to dominant baselines. It maintains a healthy long-tail vocabulary distribution (Zipf's law) and an optimal syntactic density peaking at four words. Crucially, the dataset dismantles the ``noun monopoly'' prevalent in standard corpora by elevating the proportion of adjectives to 22\%. This dense population of visual modifiers supplies the critical high-entropy variations required to prevent representation collapse during large-scale unified grounding.

\section{Theoretical Analysis: Representation Diversity}
\label{sec:theory}

We characterize representation diversity by which feature directions
receive sustained alignment supervision (Fig.~\ref{fig:capacity}).
Let $x_m\in\mathbb{R}^C$ be the shared visual token at spatial location
$m$ (after the final fusion layer) and $k$ a unit text (key) direction.
Since the attention logit $a_m=x_m^\top k$ is linear in $x_m$, the
discriminative signal carried by $k$ is the spatial variance of the
logits, defining the \emph{directional alignment capacity}
\begin{equation}
\mathrm{cap}(k)
:=
\mathrm{Var}_m(x_m^\top k)
=
k^\top\Xi_Xk,
\end{equation}
with $\Xi_X$ the covariance of $\{x_m\}$; directions with
$\mathrm{cap}(k)=0$ form the \emph{alignment-blind subspace}.
Appendix~B shows that under weight decay only gradient-sustained
directions retain non-zero capacity, so the diversity of an objective is
governed by the subspace its gradients span.

With $N_c$ expressions per image, $N$ total text tokens, and feature
dimension $C$, the three objectives span gradient subspaces of dimension
\begin{equation}
\label{eq:ladder}
\underbrace{N_c}_{\text{Contrastive}}
\;<\;
\underbrace{N-N_c}_{\texttt{mACH}}
\;<\;
\underbrace{C}_{\text{mACH+JEPA}},
\end{equation}
as upper bounds (requiring $N_c<N/2$ and $N-N_c<C$; both hold in our
data): contrastive learning supervises only the pooled-expression
subspace; \texttt{mACH} expands to the centered token subspace (softmax
invariance removes one common-mode direction per expression); and the
auxiliary JEPA objective provides gradient support in almost every
direction. Thus, only the dual-stream objective is
almost surely free of alignment-blind directions and preserves
representation diversity throughout the feature space. Formal statements
and proofs are in Supp.~Mat.~B.

\section{Experiments}
\label{sec:experiments}
\subsection{Datasets and Evaluation Metrics}

We evaluate our method on three widely used referring expression comprehension benchmarks: RefCOCO/+/g. To ensure a rigorous and fair comparison, all evaluated models are strictly benchmarked on a cleaned version of these datasets~\cite{chen2025revisiting}, which rectifies original annotation noise and spatial ambiguities. Following the standard evaluation protocol, a prediction is regarded as correct if its Intersection over Union (IoU) with the ground-truth bounding box exceeds 0.5, and Top-1 accuracy (\%) is reported as the evaluation metric.

\subsection{Implementation Details}
\label{subsec:implementation_details}

We implement both CNN-based and DETR-based variants within a unified PyTorch framework. Unless otherwise specified, a DINOv3 ConvNeXt-Tiny \cite{simeoni2025dinov3} is adopted as the visual backbone. The CNN variant employs an FPN neck with the proposed mACH head, while the DETR variant(detailed in Supp. Mat.~A) follows the RT-DETR architecture with text embeddings injected into the AIFI module for early vision-language interaction. To support multilingual grounding, we use the frozen \texttt{Qwen3-VL-Embedding-2B}~\cite{li2026qwen3} as the language encoder. All models are trained using AdamW with standard data augmentation. Detailed architectural configurations, optimization settings, and hyperparameters are provided in Supp. Mat.~E.

\begin{figure*}[t]
   \centering
   \begin{subfigure}[t]{0.245\textwidth}
       \centering
       \includegraphics[width=0.98\linewidth]{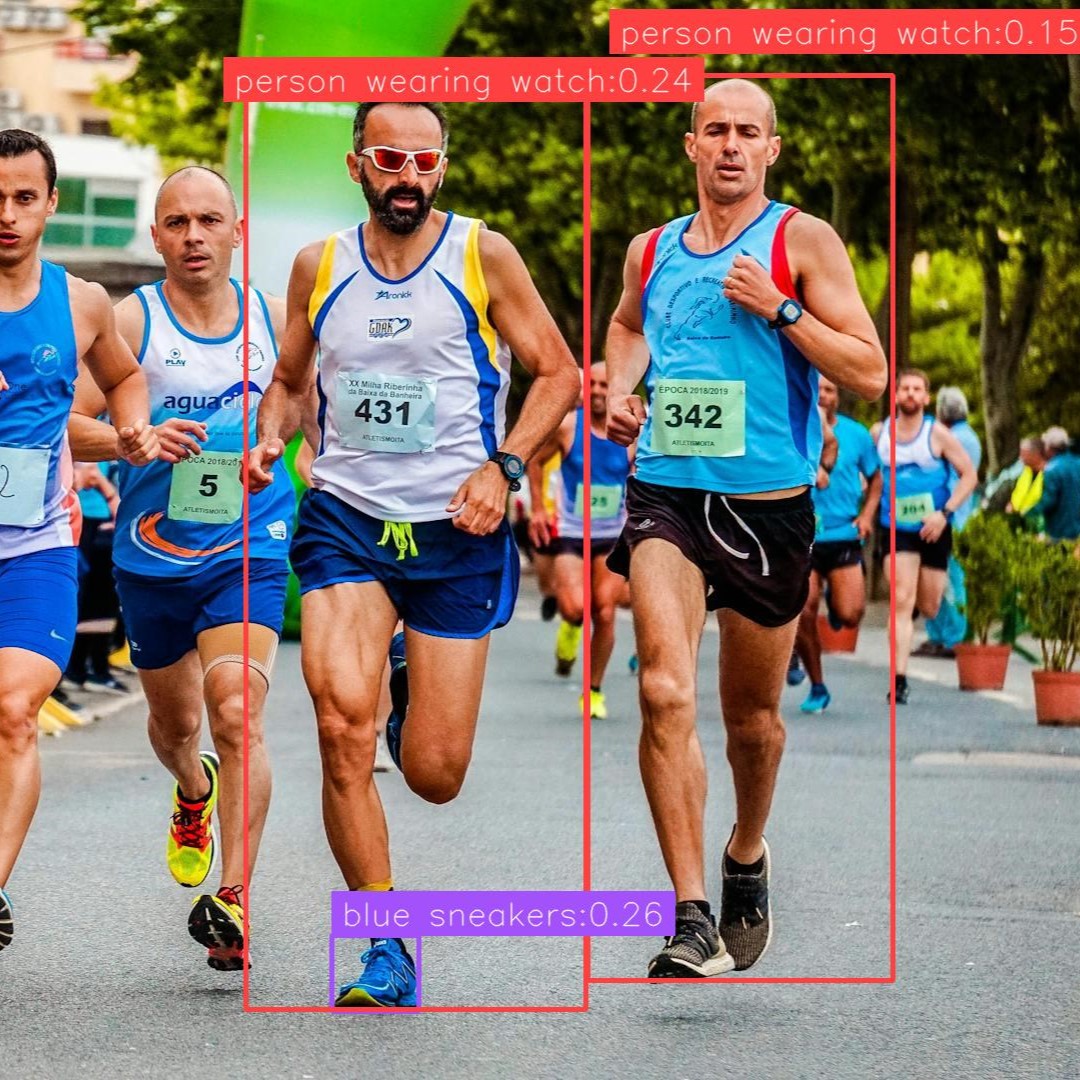}
       \caption{\textcolor{red}{person wearing watch}. \textcolor{violet}{blue sneakers}.}
       \label{fig:runningperson}
   \end{subfigure}
   \begin{subfigure}[t]{0.245\textwidth}
       \centering
       \includegraphics[width=0.98\linewidth]{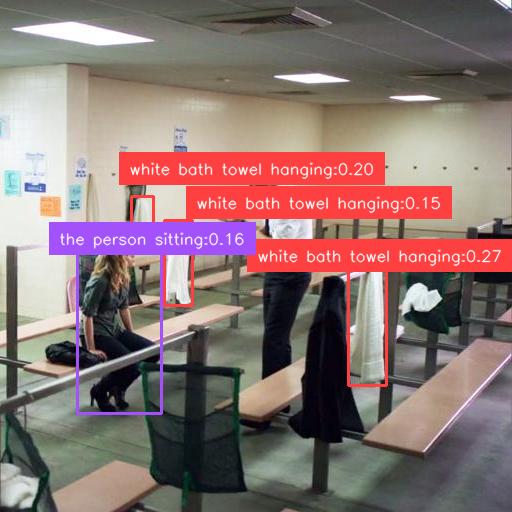}
       \caption{\textcolor{violet}{the person sitting}. \textcolor{red}{white bath towel hanging}.}
       \label{fig:obj365_22}
   \end{subfigure}%
   \begin{subfigure}[t]{0.245\textwidth}
       \centering
       \includegraphics[width=0.98\linewidth]{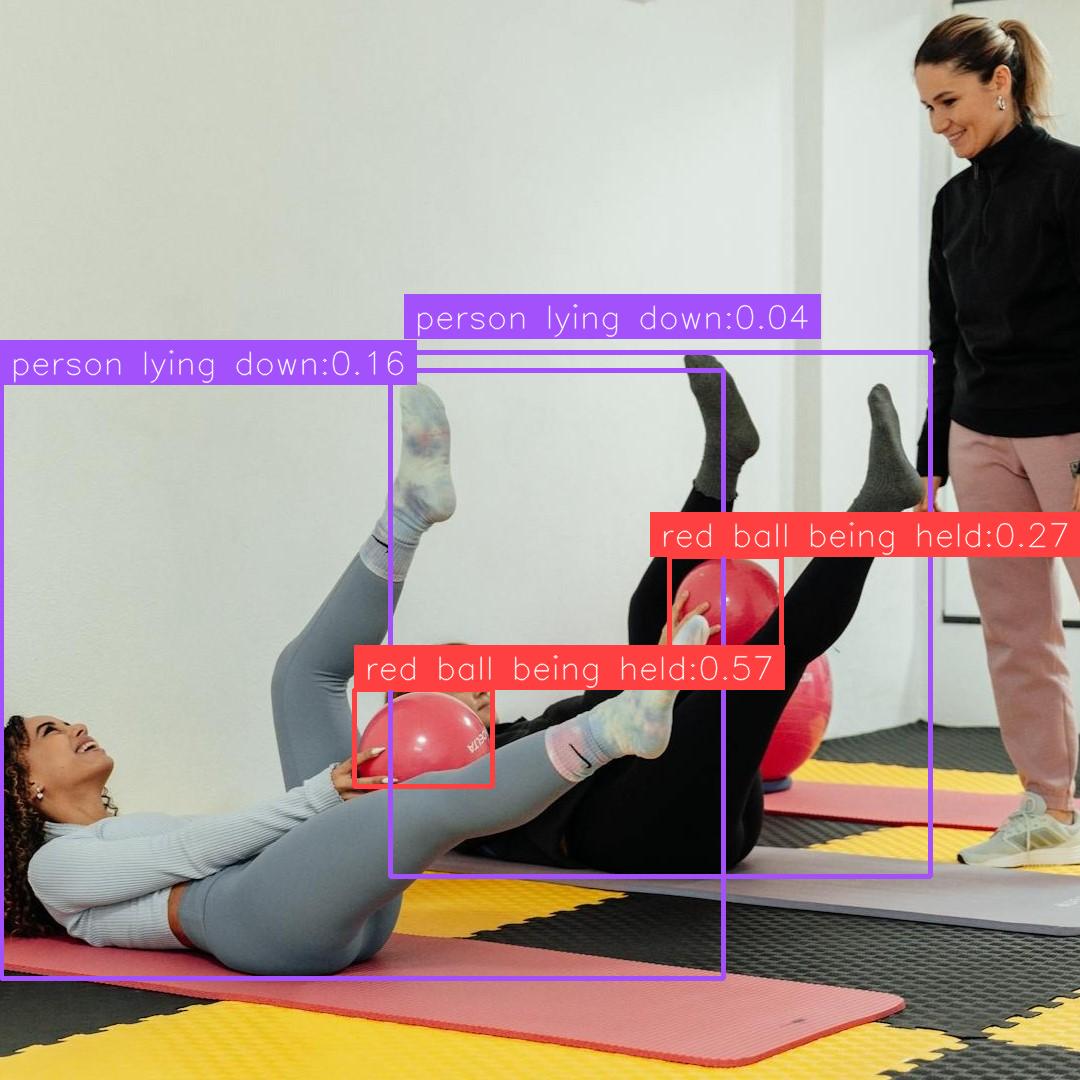}
       \caption{\textcolor{violet}{person lying down}. \textcolor{red}{red ball being held}.}
       \label{fig:ballinlegs}
   \end{subfigure}
   \begin{subfigure}[t]{0.245\textwidth}
       \centering
       \includegraphics[width=0.98\linewidth]{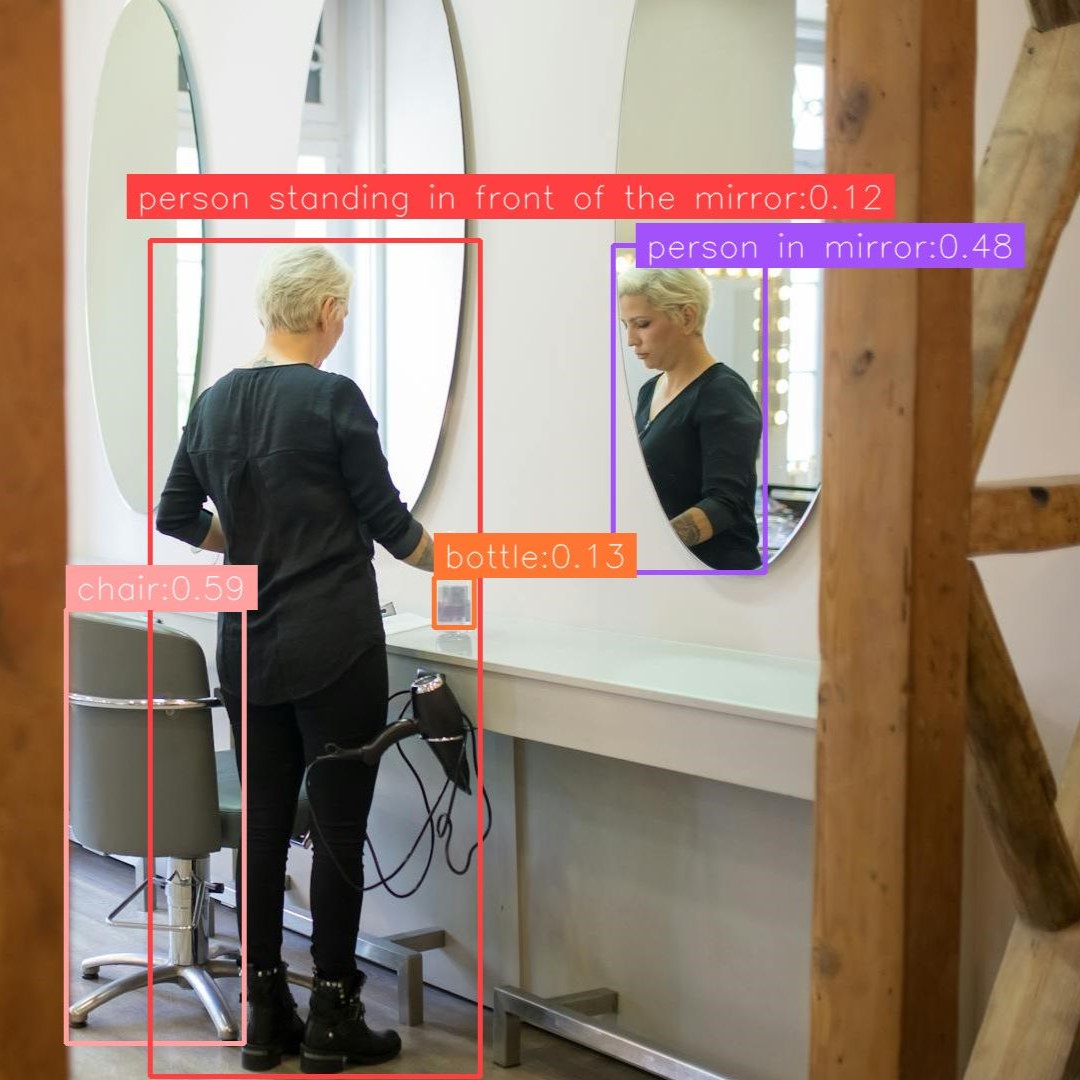}
       \caption{\textcolor{violet}{person in mirror}. \textcolor{red}{person standing in front of the mirror}. \textcolor{pink}{chair}. }
       \label{fig:personinmirror}
   \end{subfigure}%
   \caption{\textbf{Qualitative grounding results.} Text queries and corresponding predicted bounding boxes are rendered in matching colors, demonstrating robust multi-target localization under complex natural language descriptions. Zoom in for best view.}
   \label{fig:resultsvisual}
\end{figure*}
\subsection{Main Results} 
 
We evaluate our framework on RefCOCO, RefCOCO+, and RefCOCOg under both zero-shot (unified generalist) and benchmark-specific fine-tuning protocols (Tab.~\ref{tab:refcoco}). 
 
\noindent \textbf{Unified Generalist Evaluation.} 
Without target fine-tuning, our compact 75M model demonstrates strong zero-shot generalization across all three benchmarks. On RefCOCO, it achieves \textbf{85.3/89.0/82.5} (val/testA/testB), outperforming the 172M GDINO-T by \textbf{+11.3/+14.1/+23.2\%} and matching heavy MLLMs (e.g., 13B LISA++-L2 and 7B GSVA) using only 0.6\% of LISA's parameters and a lower resolution ($640^2$ vs. $1024^2$). This strong capability extends to attribute-rich RefCOCO+ (\textbf{71.8/78.2/62.7}) and long-expression RefCOCOg (\textbf{76.3/75.8}), consistently outperforming lightweight OVD baselines (e.g., YOLOE, ExpAlign, GLIP-T) by substantial margins. We attribute this advantage to our \texttt{O365-Caption} pre-training and JEPA regularization, which preserve fine-grained spatial-linguistic correspondence rather than disrupting it with aggressive geometric augmentations common in standard OVD recipes. 
 
\noindent \textbf{Benchmark-Specific Fine-Tuning.} 
When fine-tuned on target datasets, our framework achieves state-of-the-art performance across all three benchmarks. On RefCOCO, our model reaches \textbf{91.7/93.0/90.2}, consistently outperforming fine-tuned GDINO-T (89.2/91.9/86.0) and PropVG (89.0/91.6/85.7). On RefCOCO+, it achieves \textbf{76.9\%} accuracy on testB. Furthermore, on RefCOCOg, our model achieves \textbf{85.1/86.0} (val/test), outperforming PropVG (83.5/84.4) despite using significantly fewer parameters (75M vs. 490M) and a lower input resolution ($640^2$ vs. $800\!\times\!1333$). These gains confirm that our unified pre-training serves as a robust initialization for downstream REC.

Qualitative visualizations in Fig.~\ref{fig:resultsvisual} further validate the superior localization capability of our unified framework.

\subsection{Ablation Studies} 
\label{subsec:ablation_studies} 
 
We perform ablation studies under the zero-shot generalist setting using ConvNeXt-Tiny as the default backbone unless otherwise specified. Tab.~\ref{tab:unified_ablations} systematically deconstructs the contributions of our head design, pre-training data, auxiliary loss weighting, and architectural choices. 


\begin{table}[!t]
\centering
\caption{\textbf{Ablation studies on core framework components.} Evaluated under the unified setting across RefCOCO/+/g. Bold indicates the default configuration.}
\label{tab:unified_ablations}
\small 
\setlength{\tabcolsep}{1.5pt}
\begin{tabular}{l|ccc|ccc|cc}
\toprule
\multirow{2}{*}{\textbf{Configuration}} & \multicolumn{3}{c}{\textbf{RefCOCO}} & \multicolumn{3}{c}{\textbf{RefCOCO+}} & \multicolumn{2}{c}{\textbf{RefCOCOg}} \\
& val & testA & testB & val & testA & testB & val & test \\
\midrule
\multicolumn{9}{l}{\textit{(a) Head Components (Fixed Data: GoldG-f, O365-C)}} \\
\midrule
Contrastive & 78.6 & 80.3 & 73.8 & 61.0 & 66.4 & 53.2 & 72.7 & 72.3 \\
\texttt{mACH} & 83.9 & 87.5 & 79.3 & 68.5 & 77.0 & 59.4 & 74.8 & 75.0 \\
\textbf{\texttt{mACH} + \texttt{JEPA}} & \textbf{85.3} & \textbf{89.0} & \textbf{82.5} & \textbf{71.8} & \textbf{78.2} & \textbf{62.7} & \textbf{76.3} & \textbf{75.8} \\
\midrule
\multicolumn{9}{l}{\textit{(b) Pre-train Data (Fixed Head Architecture: full pipeline)}} \\
\midrule
GoldG, O365 & 77.0 & 69.3 & 71.7 & 62.7 & 69.2 & 56.0 & 66.6 & 65.7 \\
GoldG, O365-C & 85.3 & 88.1 & 82.1 & 71.7 & 77.9 & 61.4 & 76.2 & 75.7\\
GoldG-f, O365-C & \textbf{85.3} & \textbf{89.0} & \textbf{82.5} & \textbf{71.8} & \textbf{78.2} & \textbf{62.7} & \textbf{76.3} & \textbf{75.8} \\
\midrule
\multicolumn{9}{l}{\textit{(c) Effect of Auxiliary Jepa Weight $\alpha$ }} \\
\midrule
$\alpha = 0.0$ & 83.9 & 87.5 & 79.3 & 68.5 & 77.0 & 59.4 & 74.8 & 75.0 \\
$\alpha = 0.05$ & 84.8 & 88.5 & 81.4 & 70.8 & 77.9 & 61.5 & 75.9 & 75.5 \\
$\alpha = \mathbf{0.1}$  & \textbf{85.3} & \textbf{89.0} & \textbf{82.5} & \textbf{71.8} & \textbf{78.2} & \textbf{62.7} & \textbf{76.3} & \textbf{75.8} \\
$\alpha = 0.2$ & 84.7 & 88.3 & 81.1 & 70.5 & 77.6 & 61.9 & 75.4 & 75.2 \\
\midrule
\multicolumn{9}{l}{\textit{(d) Architectural Components}} \\
\midrule
\textbf{CNN}(30 ep) & 85.3 & 89.0 & 82.5 & 71.8 & 78.2 & 62.7 & 76.3 & 75.8 \\
RT-DETR(30 ep) & 84.0 & 87.8 & 80.2 & 69.9 & 77.5 & 62.7 & 75.4 & 75.2 \\
RT-DETR(90 ep) & \textbf{86.8} & \textbf{90.0} & \textbf{83.5} & \textbf{74.5} & \textbf{82.4} & \textbf{64.1} & \textbf{78.6} & \textbf{77.9} \\
\bottomrule
\end{tabular}
\end{table}

\noindent \textbf{Head Components (Tab.~\ref{tab:unified_ablations}a).} Replacing the basic contrastive head with \texttt{mACH} boosts RefCOCO val accuracy by \textbf{+5.3\%} (78.6\% vs. 83.9\%), validating the necessity of dense token-level cross-modal interaction. Integrating the auxiliary \texttt{JEPA} stream further lifts performance across all splits (reaching 85.3\% on RefCOCO val and 82.5\% on testB), demonstrating that auxiliary representation learning provides effective regularization for grounding. 
 
\noindent \textbf{Linguistic Supervision (Tab.~\ref{tab:unified_ablations}b).} Converting Objects365 tags into context-rich referring expressions yields a substantial \textbf{+8.3\%} improvement on RefCOCO val (77.0\% vs. 85.3\%) using identical images and bounding boxes, confirming that textual richness is vital for unified pre-training. Further annotation cleaning (\texttt{GoldG-f}) provides additional gains on challenging test splits (e.g., +0.9\% on RefCOCO testA). 
 
\noindent \textbf{Auxiliary JEPA Weight (Tab.~\ref{tab:unified_ablations}c).} Setting the loss weight to $\alpha = 0.1$ achieves an optimal balance between discriminative grounding and generative reconstruction, whereas higher weights ($\alpha = 0.2$) slightly diminish performance due to over-regularization. 
 
\noindent \textbf{Architecture \& Convergence (Tab.~\ref{tab:unified_ablations}d).} While our CNN-based default converges rapidly within 30 epochs (85.3\% on RefCOCO val), training an RT-DETR architecture for 90 epochs reaches the global peak performance of \textbf{86.8/90.0/83.5} on RefCOCO, demonstrating strong architectural flexibility and scaling potential for our pipeline. 
 
\noindent \textbf{Scalability Across Backbone Sizes.} 
We evaluate framework scalability across ConvNeXt Tiny, Small, and Base variants in Tab.~\ref{tab:ablation_scaling_backbone}. Performance improves steadily as visual capacity and head feature dimension $C$ expand. Notably, while scaling backbone depth from Tiny to Small ($C=768$) yields moderate gains, widening the cross-modal interaction channel to $C=1024$ in ConvNeXt-Base produces a significant jump, boosting RefCOCO+ testA by \textbf{+2.9\%} and RefCOCOg test by \textbf{+1.4\%} over Tiny. This confirms expanding the head capacity $C$ effectively eliminates representation bottlenecks, allowing our \texttt{mACH} and \texttt{JEPA} modules to scale seamlessly with larger feature spaces without saturation. 
 
\noindent \textbf{Inference Efficiency.} 
We evaluate framework latency and peak GPU memory across varying numbers of text queries ($N$) in Tab.~\ref{tab:efficiency}. Measurements are conducted on a single NVIDIA RTX 4090 at batch size 1.
\begin{table}[t]
\centering
\caption{\textbf{Scalability across visual backbone sizes.} Zero-shot performance evaluated across ConvNeXt scales. }
\label{tab:ablation_scaling_backbone}
\small 
\setlength{\tabcolsep}{1.3pt}
\begin{tabular}{l|ccc|ccc|cc}
\toprule
\multirow{2}{*}{\textbf{Backbone Scale}} & \multicolumn{3}{c}{\textbf{RefCOCO}} & \multicolumn{3}{c}{\textbf{RefCOCO+}} & \multicolumn{2}{c}{\textbf{RefCOCOg}} \\
& val & testA & testB & val & testA & testB & val & test \\
\midrule
Tiny ($C\!=\!768$)           & 85.3 & 89.0 & 82.5 & 71.8 & 78.2 & 62.7 & 76.3 & 75.8 \\
Small ($C\!=\!768$)            & 86.1 & 89.1 & 82.6 & 72.4 & 79.3 & 63.3 & 76.8 & 76.1 \\
Base ($C\!=\!1024$)             & \textbf{86.3} & \textbf{89.8} & \textbf{83.8} & \textbf{73.7} & \textbf{81.1} & \textbf{64.4} & \textbf{76.8} & \textbf{77.2} \\
\bottomrule
\end{tabular}
\end{table}
\begin{table}[t] 
 \centering 
 \caption{\textbf{Inference efficiency comparison.} Results are reported as triplets corresponding to $N=1/5/10$, where $N$ denotes the number of text queries (average 5 tokens/query) evaluated per image.} 
 \label{tab:efficiency} 
 \small 
 \setlength{\tabcolsep}{5pt} 
 \begin{tabular}{l|c|c} 
 \toprule 
 \textbf{Method}  & \textbf{Latency (ms)} $\downarrow$ & \textbf{Peak Mem.} (GB) $\downarrow$ \\ 
 \midrule 
 Contrastive    & 21/23/24 & 1.06/1.08/1.10 \\ 
 \texttt{mACH}  & 26/26/27 & 1.13/1.26/1.35 \\ 
 \bottomrule 
 \end{tabular} 
 \end{table}

\subsection{Spectral Evidence}
\label{sec:spectral_evidence}

To examine how different objectives shape the learned representation,
we collect the shared visual token features after the final fusion layer
from all RefCOCOg images and estimate their empirical feature
covariance
$
\Xi_X
=
\frac{1}{M}
\sum_{i=1}^{M}
(x_i-\bar{x})(x_i-\bar{x})^\top ,
$
where $M$ is the total number of visual tokens.
According to our theory, the eigenspectrum of $\Xi_X$
measures the distribution of directional alignment capacity:
discriminative objectives preserve capacity only inside the
language-conditioned subspace, whereas the additional JEPA objective
maintains capacity in the remaining directions.
\begin{figure}[!t] 
\centering 
\includegraphics[width=1\linewidth]{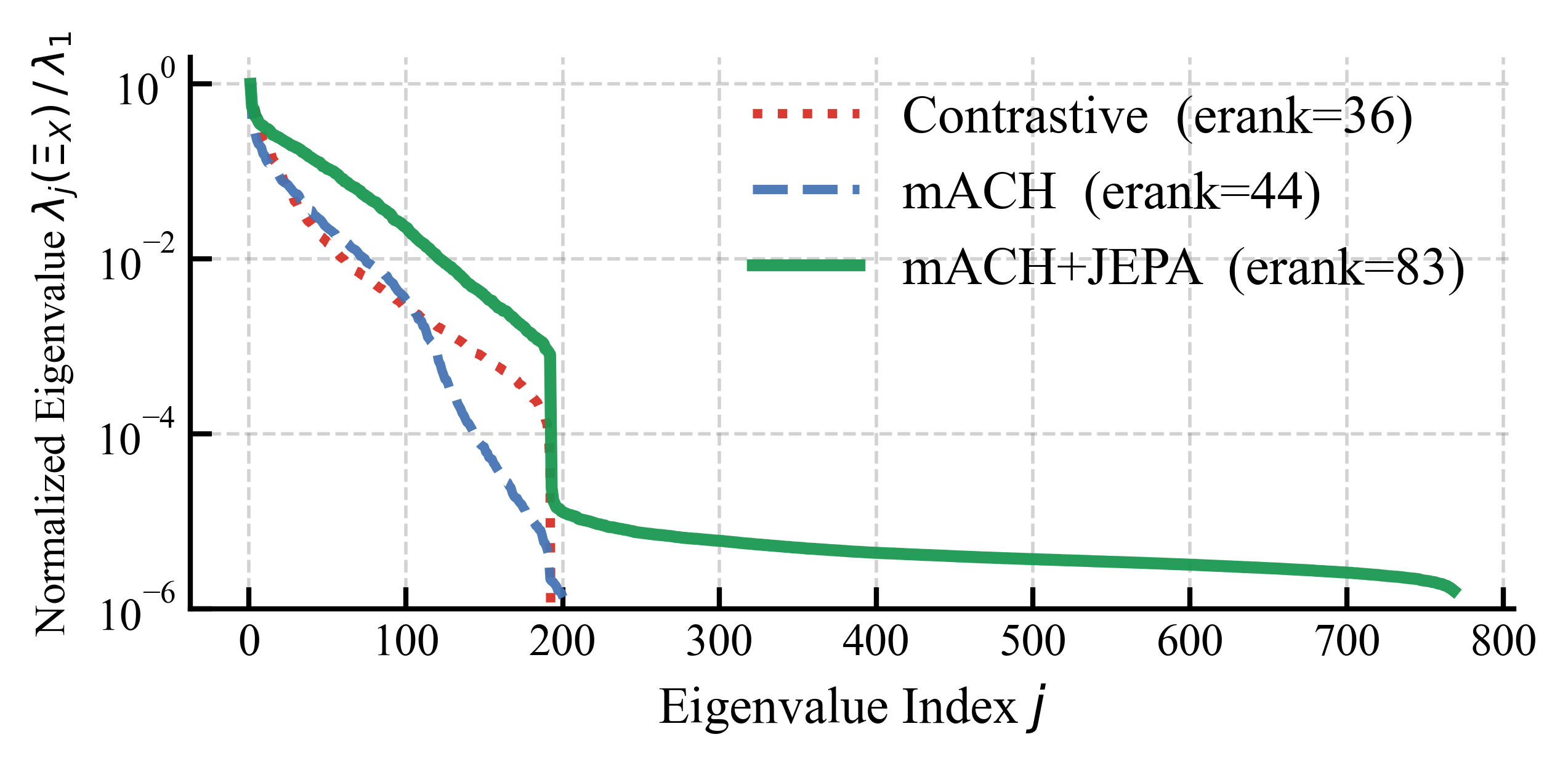} 
\caption{ 
 \textbf{Alignment-capacity spectrum.} 
 Normalized eigenspectra of the empirical feature covariance $\Xi_X$ 
 computed from shared visual token features after the final fusion 
 layer on RefCOCOg-val. Contrastive learning and mACH exhibit a shared spectral cliff around $j\!\approx\!200$, whereas mACH+JEPA 
 retains a non-vanishing spectral tail across all $768$ dimensions, consistent with the predicted spectral floor (eigenvalues below ${\sim}10^{-6}$ are at the numerical floor; cliff 
 positions, not depths, are interpretable). Reported 
 effective ranks summarize the spectrum and indicate 
 progressively richer representation diversity. 
 }
\label{fig:m1_spectrum} 
\end{figure}

Figure~\ref{fig:m1_spectrum} confirms this prediction. Contrastive learning and mACH exhibit an identical spectral cliff around $j\!\approx\!200$, after which the eigenvalues fall below the numerical precision of float32 accumulation (${\lesssim}10^{-6}\lambda_1$), indicating feature variation is confined to a language-conditioned subspace. In contrast, mACH+JEPA preserves a non-vanishing spectral tail (${\sim}10^{-5}$), consistent with the positive spectral floor predicted by our theory, and maintains measurable variance throughout the entire ambient space ($C=768$).

To summarize the spectrum with a single statistic, we report the
\emph{effective rank}: $\mathrm{erank}(\Xi_X) = \exp(-\sum_j p_j\log p_j)$, $p_j=\frac{\lambda_j}{\sum_k\lambda_k}$
which measures how many feature directions carry comparable variance
\cite{garrido2023rankme}. The effective rank increases
monotonically from 36 (Contrastive) to 44 (mACH) and 83 (mACH+JEPA),
consistent with the predicted dimension ladder
$N_c < N\!-\!N_c < C$, and indicating progressively richer representation
diversity under the dual-stream objective.

\section{Conclusion}
\label{sec:conclusion}

We revisited referring expression comprehension from the perspective of unified open-vocabulary grounding and presented a holistic data-model co-design that jointly improves linguistic supervision and visual representation learning. Specifically, our framework combines a lightweight broadcast-based grounding head with an inference-free JEPA auxiliary objective to preserve representation diversity, while \texttt{O365-Caption} enriches grounding supervision through diverse natural language descriptions. Extensive experiments demonstrate strong cross-dataset generalization and competitive performance using a static checkpoint. Beyond the proposed framework, our theoretical and empirical results suggest preserving representation diversity is an important design principle for scaling unified vision-language grounding models beyond benchmark-specific specialization.

\bibliography{aaai2027}


\clearpage 
\onecolumn

\setcounter{table}{0}  
\setcounter{figure}{0}

\setcounter{equation}{0}
\renewcommand{\thetable}{A\arabic{table}}
\renewcommand{\thefigure}{A\arabic{figure}}

\renewcommand{\theequation}{A\arabic{equation}}
\section*{Appendix A: Generalization to Transformer-based Grounding Frameworks}
\label{app:detr}

Although the main paper presents the CNN-based implementation for clarity, the proposed Modulated Attention-Contrastive Head (mACH) is architecture-agnostic and can be seamlessly integrated into transformer-based grounding detectors.

\begin{figure}[!htbp]
    \centering
    \includegraphics[width=1\linewidth]{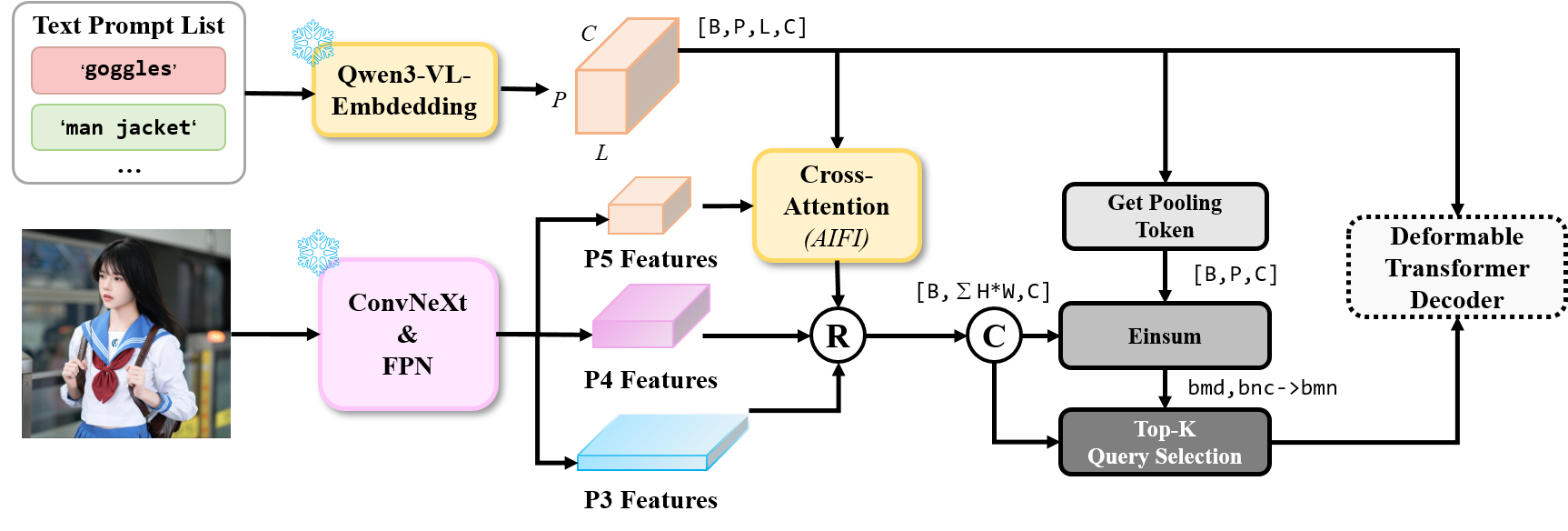}
    \caption{Transformer-based implementation of the proposed mACH framework. Early multimodal interaction is introduced through the AIFI module, while the conventional grounding head is replaced by mACH. The deformable transformer decoder and box regression branch remain identical to the original detector.}
    \label{fig:detr_architecture}
\end{figure}

Figure~\ref{fig:detr_architecture} illustrates our transformer-based implementation using a deformable DETR-style framework. Two lightweight modifications are introduced while preserving the original detection pipeline. First, to enable early vision-language interaction, the finest-scale visual features are concatenated with the language embeddings and jointly processed by the AIFI module. Second, the conventional grounding head is replaced by the proposed mACH head, while the deformable transformer decoder and the bounding-box regression branch remain unchanged.

Figure~\ref{fig:detr_decoder_compare} further compares our grounding framework with Grounding DINO. Unlike Grounding DINO, which injects language cross-attention into every decoder layer, our design keeps the deformable decoder purely visual and performs cross-modal interaction only in the lightweight mACH head after object decoding. This decouples object decoding from language grounding, preserves the standard box-level Hungarian assignment during training, and avoids the token-aware matching strategy adopted by Grounding DINO, resulting in a substantially simpler training pipeline.

\begin{figure}[!t]
    \centering
    \includegraphics[width=1\linewidth]{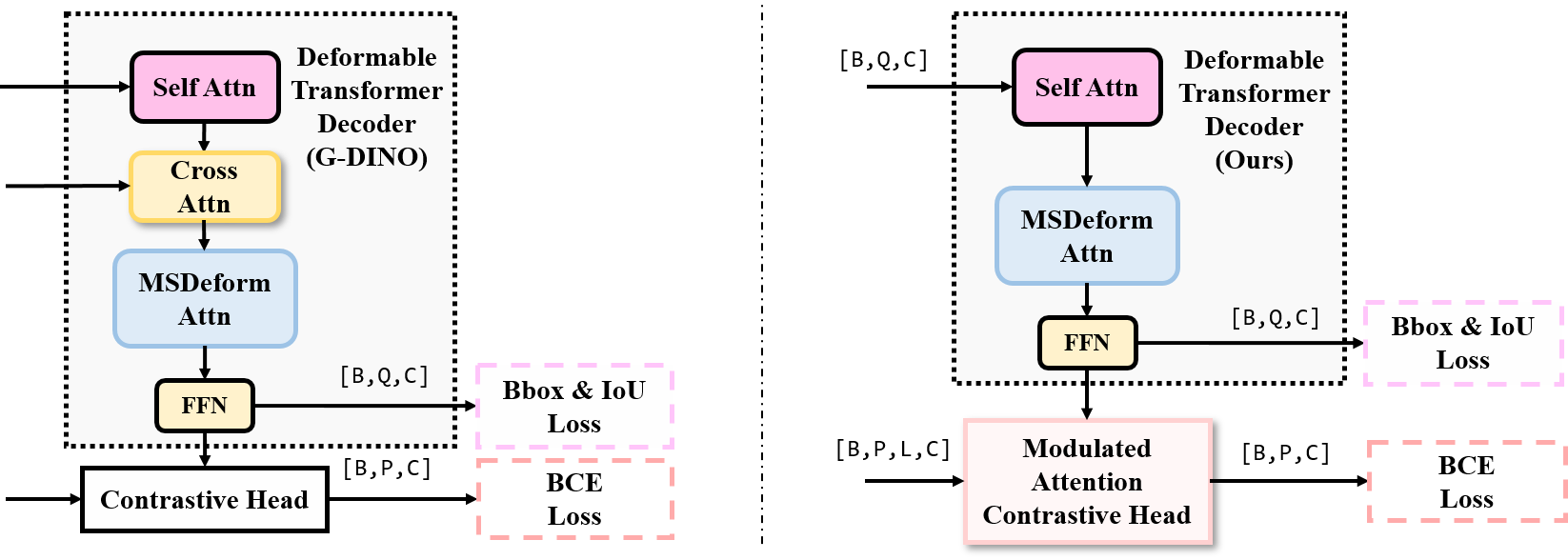}
    \caption{Comparison between Grounding DINO and the proposed transformer-based grounding framework. Grounding DINO performs language cross-attention inside every decoder layer, whereas our approach keeps the decoder purely visual and delegates cross-modal interaction to the lightweight mACH prediction head. Consequently, the proposed design preserves the standard box-level Hungarian assignment and simplifies the overall training pipeline.}
    \label{fig:detr_decoder_compare}
\end{figure}
As demonstrated by the experimental results in the main paper, this design consistently improves both CNN-based and transformer-based grounding architectures, indicating that mACH serves as a generic grounding head independent of the underlying visual backbone.

\section*{Appendix B: Directional Alignment Capacity}
\label{sec:appendix_gradient_rank}

This appendix formalizes the argument of the Theoretical Analysis section: each architecture determines which gradient directions are reachable on the shared features, and the reachable dimensions of the three objectives form the ladder $N_c<N-N_c<C$. We define a directional measure of localization signal (Definition~\ref{def:cap}), show that it persists only on gradient-sustained directions (Lemma~\ref{lem:decay}), compute the reachable subspace of each objective (Lemmas~\ref{lem:contrast}--\ref{lem:jepa_full}), and compare the three paradigms (Theorem~\ref{thm:comparison}, Corollaries~\ref{cor:diversity} and \ref{cor:ood}).

\subsection{Setup}
\label{subsec:setup}

\textbf{Notation.} Each image is represented by visual tokens $X=[x_1,\dots,x_M]\in\mathbb{R}^{C\times M}$ (the shared trunk output; the query projection is absorbed into $X$), paired with $N_c$ referring expressions, the $i$-th given by token embeddings $T^{(i)}=[t_1^{(i)},\dots,t_{L_i}^{(i)}]\in\mathbb{R}^{d\times L_i}$; write $N:=\sum_{i=1}^{N_c}L_i$. Positive expressions are fixed per image; negatives are resampled every iteration. Let $\Xi_X:=\mathbb{E}[(x_m-\bar{x})(x_m-\bar{x})^\top]$ be the centered token covariance, and for any objective $\mathcal{L}$ acting on $X$, define its gradient field and gradient second moment
\begin{equation}
\label{eq:grad_field}
G(\mathcal{L}):=\nabla_X\mathcal{L},
\qquad
\Gamma(\mathcal{L}):=\mathbb{E}\big[G(\mathcal{L})\,G(\mathcal{L})^\top\big].
\end{equation}
The head broadcasts $X$ along the text axis; backpropagation through a broadcast is branch summation, so all $N_c$ text branches accumulate their gradients on the \emph{same} $X$:
\begin{equation}
\label{eq:broadcast_adjoint}
G(\mathcal{L}_{\mathrm{m}})=\frac{1}{N_c}\sum_{i=1}^{N_c}G^{(i)} .
\end{equation}

\textbf{\texttt{mACH} forward map.} Keys $k_l^{(i)}=W_Kt_l^{(i)}$, logits $a_{m,l}^{(i)}=x_m^\top k_l^{(i)}/\sqrt{C}$, attention $A_{m,l}^{(i)}=\mathrm{softmax}_l(a_{m,l}^{(i)})$, and scores
\begin{equation}
\label{eq:mach_forward}
S_m^{(i)}=\sum_{l=1}^{L_i}A_{m,l}^{(i)}\,c_l^{(i)}+b,
\qquad
c_l^{(i)}:=e^{\tau}w_\psi^\top W_Vt_l^{(i)}\in\mathbb{R}.
\end{equation}
The logit scale $e^\tau$ and bias $b$ shift and rescale scores uniformly and play no directional role. The grounding objective is $\mathcal{L}_{\mathrm{m}}=\mathcal{L}_{\mathrm{BCE}}(S)$.

\textbf{JEPA stream.} A student $\mathcal{P}_\theta:\mathbb{R}^C\to\mathbb{R}^C$ is paired with an EMA teacher updated as
\begin{equation}
\label{eq:ema}
\mathcal{P}_{\mathrm{EMA}}
\leftarrow
\lambda\,\mathcal{P}_{\mathrm{EMA}}+(1-\lambda)\,\mathcal{P}_\theta,
\qquad \lambda\in(0,1).
\end{equation}
For $z_m=\mathcal{P}_\theta(x_m)$ and a stochastically resampled mask $\Omega$,
\begin{equation}
\label{eq:mask}
z_m^{\mathrm{msk}}=
\begin{cases}
\mu+\epsilon, & m\in\Omega,\\
z_m, & m\notin\Omega,
\end{cases}
\qquad \epsilon\sim\mathcal{N}(0,\sigma^2 I)
\quad\Longrightarrow\quad
\frac{\partial z_m^{\mathrm{msk}}}{\partial x_m}=\mathbf{0},\ \ m\in\Omega .
\end{equation}
With $T'=W_{\mathrm{proj}}[T^{(1)},\dots,T^{(N_c)}]\in\mathbb{R}^{C\times N}$, a joint self-attention predictor (parameters $\phi$) operates on $[(Z^{\mathrm{msk}})^\top;(T')^\top]$ with block affinity
\begin{equation}
\label{eq:block_attn}
\mathcal{A}=
\begin{pmatrix}
\mathcal{A}_{vv} & \mathcal{A}_{vt}\\
\mathcal{A}_{tv} & \mathcal{A}_{tt}
\end{pmatrix},
\qquad
\hat{z}_{\mathrm{pred},o}
=\sum_{j\notin\Omega}\mathcal{A}_{vv,\,o,j}\,W_V^{(v)}z_j
+\sum_{l=1}^{N}\mathcal{A}_{vt,\,o,l}\,W_V^{(t)}t'_l,
\quad o\in\Omega ,
\end{equation}
where constant contributions of masked-token values are absorbed into the predictor bias. With $\bar{u}:=u/\|u\|$ and $z_{\mathrm{tgt},o}=\mathcal{P}_{\mathrm{EMA}}(x_o)$,
\begin{equation}
\label{eq:jepa_loss}
\mathcal{L}_{\mathrm{J}}
=\frac{1}{|\Omega|}\sum_{o\in\Omega}
\Big[\big(1-\langle\bar{\hat{z}}_{\mathrm{pred},o},\bar{z}_{\mathrm{tgt},o}\rangle\big)
+\beta\,\mathrm{SmoothL1}\big(\bar{\hat{z}}_{\mathrm{pred},o},\bar{z}_{\mathrm{tgt},o}\big)\Big],
\qquad
\mathcal{L}_{\mathrm{tot}}=\mathcal{L}_{\mathrm{m}}+\alpha\,\mathcal{L}_{\mathrm{J}} .
\end{equation}

\textbf{Roles of the four blocks.} The loss supervises only masked \emph{visual} predictions. Gradient therefore flows back to $X$ through $\mathcal{A}_{vv}$, which routes the driving signal from masked positions to unmasked visual tokens. The $\mathcal{A}_{vt}$ path lets text context modulate the predictions -- and couples the affinities to $T$ through the softmax normalization -- but the results below do not rely on it. The text-side outputs, governed by $\mathcal{A}_{tv}$ and $\mathcal{A}_{tt}$, receive no supervision and carry no gradient.

Grounding requires the score map to vary across positions: a flat score map carries no localization signal. Since the attention logits are linear in the visual tokens, $a_m=x_m^\top k$, their spatial discriminability is exactly $\mathrm{Var}_m(a_m)$; the following definition is therefore a property of the attention mechanism itself rather than an ad hoc construct.

\begin{definition}[Directional Alignment Capacity]
\label{def:cap}
For a unit direction $k\in\mathbb{R}^{C}$,
\begin{equation}
\mathrm{cap}(k)\;:=\;\mathrm{Var}_m\big(x_m^\top k\big)\;=\;k^\top\Xi_X\,k .
\end{equation}
$\mathrm{cap}(k)$ is the spatial variance of the attention logits available to any token whose key points along $k$; $\mathrm{cap}(k)=0$ implies position-independent attention and hence a flat score map. We call $\{k:\mathrm{cap}(k)=0\}$ the \emph{alignment-blind subspace} and its complement the \emph{alignment-active subspace}.
\end{definition}

\subsection{Gradient-Sustained Capacity}
\label{subsec:dynamics}

Alignment capacity is not a static property of the representation: under weight decay, feature variance contracts unless continuously replenished by the optimization signal. We model this as $\dot{X}=-GB-\lambda X$, where $\lambda>0$ is the decay rate and $B\succeq0$ an arbitrary backbone-induced preconditioner (exact for a terminal linear map $X=WH$ with decayed $W$, where $B=H^\top H$; and for degree-$1$ homogeneous feature maps with full decay, where $B=JJ^\top$). Only one property is used below: the excitation term vanishes on directions orthogonal to all gradient columns.

\begin{lemma}[Gradient sustains directional variance]
\label{lem:decay}
If $k^\top G(\mathcal{L})=\mathbf{0}$ almost surely along the trajectory, then
\begin{equation}
k^\top\Xi_X^{(t)}k\;\le\;e^{-2\lambda t}\,k^\top\Xi_X^{(0)}k,
\end{equation}
i.e.\ $\mathrm{cap}(k)\to0$: directional variance persists only where the gradient signal sustains it.
\end{lemma}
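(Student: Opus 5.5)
Under the dynamics $\dot X=-GB-\lambda X$, I will project the flow onto the fixed direction $k$ and show that the projected tokens obey a pure decay ODE. The variance bound then follows by integration.

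First, define the row vector $y^{(t)}:=k^\top X^{(t)}\in\mathbb{R}^{1\times M}$. This collects the projections $x_m^\top k$ over all spatial positions. Left-multiplying the dynamics by $k^\top$ gives
\begin{equation*}
\dot y = -\,(k^\top G)\,B-\lambda\,y .
\end{equation*}
By hypothesis $k^\top G(\mathcal{L})=\mathbf{0}$ almost surely along the trajectory. Hence the excitation term vanishes identically whatever the preconditioner $B\succeq0$ is. This is the only property of $B$ I use. The projection therefore satisfies $\dot y=-\lambda y$, so $y^{(t)}=e^{-\lambda t}y^{(0)}$ pathwise (almost surely).

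Next, I relate $y$ to capacity. By Definition~\ref{def:cap}, $k^\top\Xi_X k=\mathrm{Var}_m(x_m^\top k)$. This is the centered second moment of the entries of $y$, namely $\mathbb{E}[(y_m-\bar y)^2]$ with $\bar y=k^\top\bar x$. Centering is linear and commutes with the scalar rescaling. Therefore
\begin{equation*}
y_m^{(t)}-\bar y^{(t)}=e^{-\lambda t}\bigl(y_m^{(0)}-\bar y^{(0)}\bigr),
\end{equation*}
and squaring and averaging gives $k^\top\Xi_X^{(t)}k=e^{-2\lambda t}\,k^\top\Xi_X^{(0)}k$. This equality implies the claimed inequality. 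If the expectation in $\Xi_X$ also averages over images or stochastic masks, the argument is unchanged: the identity holds per sample almost surely, and taking expectations preserves it. As $t\to\infty$, $\mathrm{cap}(k)\to0$.

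The main obstacle is interpretive rather than computational. I need the hypothesis ``$k^\top G=\mathbf{0}$ along the trajectory'' to apply to the gradient that actually drives $X$. For a stochastic objective I will read it as holding for every realized gradient, not only in expectation, which is what ``almost surely'' gives. I also have to check that the modelled flow is the one governing $X$. The lemma is stated conditionally on this model, so I will take it as given.

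I will add a remark that the inequality, rather than the equality, covers a perturbed model with additional dissipation, for example $\lambda$ replaced by any $\lambda'\ge\lambda$, or extra contractive terms in the $k$ direction. In that case a Grönwall argument on $\tfrac{d}{dt}\|y-\bar y\|^2\le-2\lambda\|y-\bar y\|^2$ yields the same bound.
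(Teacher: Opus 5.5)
Your proposal is correct and takes essentially the paper's approach: project $\dot X=-GB-\lambda X$ onto $k$, observe that the excitation term $k^\top G B$ vanishes for any $B\succeq 0$, and conclude pure exponential decay. The only difference is cosmetic. The paper tracks the squared projections $e_m=(k^\top x_m)^2$ and then asserts that the token mean decays at the same rate, whereas you work with the linear projection $y=k^\top X$, which makes the centering step immediate and even yields equality.
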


\begin{proof}
For $e_m:=(k^\top x_m)^2$, $\dot{e}_m=-2(k^\top x_m)\,k^\top[GB]_m-2\lambda e_m$. Since $[GB]_m=\sum_jg_jB_{jm}$ and $k^\top g_j=0$ a.s.\ for every column $j$, the first term vanishes for any $B\succeq0$, giving $\dot{e}_m=-2\lambda e_m$; the token mean decays at the same rate, yielding the claim for the centered variance.
\end{proof}

\subsection{Gradient Subspaces of the Three Objectives}
\label{subsec:subspaces}

\begin{lemma}[Pooled contrastive head]
\label{lem:contrast}
For $S_m^{(i)}=(t^{(i)})^\top x_m$ with pooled text vector $t^{(i)}\in\mathbb{R}^C$, writing the BCE residual $\delta_m^{(i)}:=\partial\mathcal{L}_{\mathrm{BCE}}/\partial S_m^{(i)}$,
\begin{equation}
G_{\mathrm{contrast}}=\frac{1}{N_c}\sum_{i=1}^{N_c}t^{(i)}(\boldsymbol{\delta}^{(i)})^\top,
\qquad
\mathrm{Col}(G_{\mathrm{contrast}})\subseteq\mathrm{span}\{t^{(i)}\}_{i=1}^{N_c},
\quad \dim\le N_c .
\end{equation}
\end{lemma}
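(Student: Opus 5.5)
The plan is a direct chain-rule computation followed by a column-space argument. The proof needs no earlier result beyond the broadcast adjoint \eqref{eq:broadcast_adjoint}.

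\textbf{Per-branch gradient.} Fix an expression $i$. The score $S_m^{(i)}=(t^{(i)})^\top x_m$ depends on $X$ only through column $x_m$, and linearly. Hence $\partial S_m^{(i)}/\partial x_m=t^{(i)}$, and $\partial S_{m'}^{(i)}/\partial x_m=0$ for $m'\neq m$. The chain rule then gives the $m$-th column of the per-branch gradient:
\[
G^{(i)}_{:,m}=\sum_{m'}\delta_{m'}^{(i)}\,\frac{\partial S_{m'}^{(i)}}{\partial x_m}=\delta_m^{(i)}\,t^{(i)}.
\]
Stacking these columns over $m$ gives the rank-one matrix $G^{(i)}=t^{(i)}(\boldsymbol{\delta}^{(i)})^\top$, where $\boldsymbol{\delta}^{(i)}=(\delta_1^{(i)},\dots,\delta_M^{(i)})^\top$. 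The BCE residual $\delta_m^{(i)}=\sigma(S_m^{(i)})-Y_m^{(i)}$ is a scalar, so this step is purely algebraic.

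\textbf{Summation and column space.} Applying the broadcast adjoint \eqref{eq:broadcast_adjoint}, the branch gradients are averaged on the shared $X$, which yields the displayed formula for $G_{\mathrm{contrast}}$. Every column of $G_{\mathrm{contrast}}$ is $\frac{1}{N_c}\sum_i \delta_m^{(i)}t^{(i)}$, a linear combination of $\{t^{(i)}\}_{i=1}^{N_c}$. The column space is therefore contained in their span. A span of $N_c$ vectors has dimension at most $N_c$, which gives the bound.

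\textbf{Obstacle.} The only delicate point is interpretive rather than technical. Since negatives are resampled every iteration, the set $\{t^{(i)}\}$ changes over time. The lemma must therefore be read per iteration, with $t^{(i)}$ denoting the pooled text vectors present in that step. I would state explicitly that the containment holds pointwise for each sampled batch, and that the pooled vectors $t^{(i)}$ do not depend on $X$, so no extra gradient terms arise through the text side. I would also note that any normalization or scaling of the score is a scalar factor absorbed into $\delta_m^{(i)}$, so it leaves the column space unchanged.
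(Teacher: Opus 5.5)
Your proposal is correct and follows the paper's proof essentially step for step. You compute the per-branch gradient $g_m^{(i)}=\delta_m^{(i)}t^{(i)}$, stack it into the rank-one $G^{(i)}=t^{(i)}(\boldsymbol{\delta}^{(i)})^\top$, and average via the broadcast adjoint \eqref{eq:broadcast_adjoint} so that every column lies in $\mathrm{span}\{t^{(i)}\}_{i=1}^{N_c}$; your per-iteration reading and the remark that scalar rescalings are absorbed into $\delta_m^{(i)}$ are sensible clarifications the paper leaves implicit.
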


\begin{proof}
Since $\nabla_{x_m}S_m^{(i)}=t^{(i)}$, the branch gradient at token $m$
is $g_m^{(i)}=\delta_m^{(i)}t^{(i)}$; stacking the $M$ columns,
$G^{(i)}=t^{(i)}\big(\boldsymbol{\delta}^{(i)}\big)^\top$ with
$\boldsymbol{\delta}^{(i)}:=(\delta_1^{(i)},\dots,\delta_M^{(i)})^\top$.
Assembling via \eqref{eq:broadcast_adjoint},
\begin{equation}
G_{\mathrm{contrast}}
=\frac{1}{N_c}\sum_{i=1}^{N_c}t^{(i)}\big(\boldsymbol{\delta}^{(i)}\big)^\top
\;\Longrightarrow\;
\mathrm{Col}(G_{\mathrm{contrast}})
\subseteq\sum_{i=1}^{N_c}\mathrm{span}\big\{t^{(i)}\big\}
=\mathrm{span}\big\{t^{(i)}\big\}_{i=1}^{N_c},
\qquad \dim\le N_c .
\end{equation}
\end{proof}

\begin{lemma}[\texttt{mACH} head]
\label{lem:mach}
With the \emph{centered key subspace}
\begin{equation}
\label{eq:centered_key}
\mathcal{V}_{\mathcal{T}}^{0}
:=\sum_{i=1}^{N_c}\big\{W_KT^{(i)}\beta:\mathbf{1}^{\top}\beta=0\big\},
\qquad
\mathrm{Col}(G_{\mathrm{mACH}})\subseteq\mathcal{V}_{\mathcal{T}}^{0},
\qquad
\mathrm{rank}(G_{\mathrm{mACH}})\le\min\big(C,\ N-N_c\big),
\end{equation}
per iteration. Over training, the visited union of these subspaces remains within the image of the frozen text-embedding manifold, whose effective dimension is measurable offline.
\end{lemma}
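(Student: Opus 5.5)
The plan follows the proof of Lemma~\ref{lem:contrast}. We compute the branch gradient $G^{(i)}$ of a single text branch column by column, identify its column space, and then assemble the $N_c$ branches via the broadcast adjoint \eqref{eq:broadcast_adjoint}.

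\textbf{Single-token gradient.} Fix branch $i$ and token $m$. The score in \eqref{eq:mach_forward} depends on $x_m$ only through the logits $a_{m,l}^{(i)}=x_m^\top k_l^{(i)}/\sqrt{C}$. The chain rule therefore gives
\[
g_m^{(i)}=\delta_m^{(i)}\sum_{l=1}^{L_i}\frac{\partial S_m^{(i)}}{\partial a_{m,l}^{(i)}}\frac{k_l^{(i)}}{\sqrt{C}}
=\frac{\delta_m^{(i)}}{\sqrt{C}}\,W_KT^{(i)}\beta_m^{(i)},
\]
where $\beta_m^{(i)}:=J_{\mathrm{sm}}(a_m^{(i)})\,c^{(i)}$ and $J_{\mathrm{sm}}=\mathrm{diag}(A)-AA^\top$ is the softmax Jacobian. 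The key step is that $\mathbf{1}^\top J_{\mathrm{sm}}=\mathbf{1}^\top\mathrm{diag}(A)-(\mathbf{1}^\top A)A^\top=A^\top-A^\top=\mathbf{0}$. This holds because the softmax output sums to one, which is the same fact as softmax invariance to a common shift of the logits. Hence $\mathbf{1}^\top\beta_m^{(i)}=0$, so $g_m^{(i)}$ lies in $\{W_KT^{(i)}\beta:\mathbf{1}^\top\beta=0\}$. This subspace is the image under $W_KT^{(i)}$ of a hyperplane in $\mathbb{R}^{L_i}$, so its dimension is at most $L_i-1$.

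\textbf{Assembly.} Stack the $M$ columns to form $G^{(i)}$. Summing over branches by \eqref{eq:broadcast_adjoint} places $\mathrm{Col}(G_{\mathrm{mACH}})$ inside the sum of subspaces $\mathcal{V}_{\mathcal{T}}^{0}$. Its dimension is at most $\sum_i(L_i-1)=N-N_c$. Since the columns live in $\mathbb{R}^C$, the rank is also at most $C$. Together these give $\mathrm{rank}(G_{\mathrm{mACH}})\le\min(C,N-N_c)$ at each iteration.

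\textbf{Training-time statement.} For the final sentence, the text encoder is frozen, so every $T^{(i)}$ seen during training is a point of the fixed embedding set. The union of the visited subspaces $\mathcal{V}^0_{\mathcal{T}}$ is therefore contained in $W_K$ applied to the linear span of that set. I would state this only as a containment, with the effective dimension left as an empirical quantity.

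\textbf{Main obstacle.} The delicate point is precisely this training-time claim. $W_K$ itself evolves during training, so the containment should be phrased relative to the current $W_K$. Alternatively, one can note that the span of the frozen text embeddings bounds the dimension independently of $W_K$, because $\dim W_K\mathcal{S}\le\dim\mathcal{S}$ for any subspace $\mathcal{S}$.

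I expect the per-iteration bound to be routine; the only care needed is to check that padding tokens are excluded, which the variable-length attention guarantees.
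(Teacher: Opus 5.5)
Your proposal is correct and follows the paper's proof essentially step for step: the same chain rule through the logits, the same zero-sum property of the softmax-Jacobian coefficients, and the same assembly via the broadcast adjoint to get $\dim\le N-N_c$. The only difference in the core argument is that you obtain the zero sum from $\mathbf{1}^\top(\mathrm{diag}(A)-AA^\top)=\mathbf{0}$, whereas the paper sums $\beta_{m,j}=A_{m,j}(c_j-\sum_l A_{m,l}c_l)$ directly. The paper's proof does not address the closing ``over training'' sentence at all, so your caveat that $W_K$ evolves is well taken; note, though, that $\dim W_K\mathcal{S}\le\dim\mathcal{S}$ bounds only each per-iteration subspace, not the span of the union over the iterates $W_K^{(t)}\mathcal{S}$.
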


\begin{proof}
$S_m^{(i)}$ depends on $x_m$ only through the logits $a_{m,l}^{(i)}$.
The softmax Jacobian
$\partial A_{m,l}^{(i)}/\partial a_{m,j}^{(i)}
=A_{m,l}^{(i)}(\delta_{lj}-A_{m,j}^{(i)})$ gives
\begin{equation}
\nabla_{x_m}S_m^{(i)}
=\frac{1}{\sqrt{C}}\sum_{l,j}c_l^{(i)}A_{m,l}^{(i)}
\big(\delta_{lj}-A_{m,j}^{(i)}\big)\,k_j^{(i)}
=\frac{1}{\sqrt{C}}\sum_{j}\beta_{m,j}^{(i)}k_j^{(i)},
\qquad
\beta_{m,j}^{(i)}
:=A_{m,j}^{(i)}\Big(c_j^{(i)}-\sum_{l}A_{m,l}^{(i)}c_l^{(i)}\Big).
\end{equation}
Since $\sum_{j}A_{m,j}^{(i)}=1$,
\begin{equation}
\sum_{j}\beta_{m,j}^{(i)}
=\sum_{j}A_{m,j}^{(i)}c_j^{(i)}
-\Big(\sum_{j}A_{m,j}^{(i)}\Big)\Big(\sum_{l}A_{m,l}^{(i)}c_l^{(i)}\Big)
=0 .
\end{equation}
In matrix form, with
$\beta_m^{(i)}:=(\beta_{m,1}^{(i)},\dots,\beta_{m,L_i}^{(i)})^\top$ and
$k_j^{(i)}=W_Kt_j^{(i)}$,
\begin{equation}
\nabla_{x_m}S_m^{(i)}
=\frac{1}{\sqrt{C}}\,W_KT^{(i)}\beta_m^{(i)},
\qquad
\mathbf{1}^\top\beta_m^{(i)}=0 .
\end{equation}
The branch gradient $g_m^{(i)}=\delta_m^{(i)}\nabla_{x_m}S_m^{(i)}$
differs only by the scalar BCE residual, so every column of $G^{(i)}$
lies in $\{W_KT^{(i)}\beta:\mathbf{1}^\top\beta=0\}$, a subspace of
dimension at most $L_i-1$. Assembling via \eqref{eq:broadcast_adjoint},
\begin{equation}
\mathrm{Col}(G_{\mathrm{mACH}})
\;\subseteq\;\sum_{i=1}^{N_c}\mathrm{Col}(G^{(i)})
\;\subseteq\;\sum_{i=1}^{N_c}\big\{W_KT^{(i)}\beta:\mathbf{1}^\top\beta=0\big\}
\;=:\;\mathcal{V}_{\mathcal{T}}^{0},
\qquad
\dim\mathcal{V}_{\mathcal{T}}^{0}\le\sum_{i=1}^{N_c}(L_i-1)=N-N_c .
\end{equation}
\end{proof}

\begin{lemma}[Text-free driving signal]
\label{lem:jepa_free}
The JEPA field is driven by text-free targets: the loss gradients $h_o:=\nabla_{\hat{z}_{\mathrm{pred},o}}\mathcal{L}_{\mathrm{J}}$ point toward EMA targets $\mathcal{P}_{\mathrm{EMA}}(x_o)$, functions of $X$ alone \eqref{eq:ema}--\eqref{eq:jepa_loss}. Hence $\mathrm{Col}(G_{\mathrm{J}})$ is not algebraically confined to $\mathcal{V}_{\mathcal{T}}^{0}$, in contrast to Lemmas~\ref{lem:contrast} and \ref{lem:mach}.
\end{lemma}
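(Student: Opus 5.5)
The plan is to compute $G_{\mathrm{J}}=\nabla_X\mathcal{L}_{\mathrm{J}}$ explicitly by the chain rule through the predictor \eqref{eq:block_attn}. Then show that its columns contain a term whose direction is determined by $X$ and the EMA teacher, not by the text keys $W_KT^{(i)}$. First, differentiate \eqref{eq:jepa_loss} with respect to $\hat z_{\mathrm{pred},o}$. For the cosine term, $\nabla_{\hat z}(1-\langle\bar{\hat z},\bar z_{\mathrm{tgt}}\rangle)=-\|\hat z\|^{-1}(I-\bar{\hat z}\bar{\hat z}^\top)\bar z_{\mathrm{tgt},o}$. The SmoothL1 term similarly gives $\|\hat z\|^{-1}(I-\bar{\hat z}\bar{\hat z}^\top)\,\rho(\bar{\hat z}-\bar z_{\mathrm{tgt}})$ with $\rho$ the clipped identity. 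Hence $h_o$ is a projected combination of $\bar{\hat z}_{\mathrm{pred},o}$ and $\bar z_{\mathrm{tgt},o}=\overline{\mathcal{P}_{\mathrm{EMA}}(x_o)}$. The target enters as a function of $x_o$ only. This establishes the first claim, that the driving signal points toward text-free EMA targets. Under the usual stop-gradient on the teacher, it is a constant vector field supplied to the student path.

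Second, I will push $h_o$ back to the unmasked tokens $x_j$, $j\notin\Omega$. By \eqref{eq:mask}, masked positions contribute zero Jacobian. For $j\notin\Omega$, the gradient through the value path of $\mathcal{A}_{vv}$ is
\[
g_j^{\mathrm{val}}=\sum_{o\in\Omega}\mathcal{A}_{vv,o,j}\,J_{\mathcal{P}_\theta}(x_j)^\top (W_V^{(v)})^\top h_o/|\Omega| ,
\]
plus an affinity-path term obtained via the softmax Jacobian, exactly as in Lemma~\ref{lem:mach}. The key observation is that the value-path term lies in the range of $J_{\mathcal{P}_\theta}(x_j)^\top (W_V^{(v)})^\top$ applied to the vectors $h_o$. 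These vectors are built from $\mathcal{P}_{\mathrm{EMA}}(x_o)$ and $\hat z_{\mathrm{pred},o}$. No factor $W_KT^{(i)}$ or zero-sum coefficient constraint appears, in contrast to the structural factorizations $G^{(i)}=t^{(i)}\boldsymbol\delta^{(i)\top}$ and $\nabla_{x_m}S_m^{(i)}=W_KT^{(i)}\beta_m^{(i)}/\sqrt{C}$ in Lemmas~\ref{lem:contrast} and \ref{lem:mach}. I will also note that the $\mathcal{A}_{vt}$ path contributes only through the affinities. The statement only needs the value-path term, consistent with the remark that the results do not rely on $\mathcal{A}_{vt}$.

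Third, "not algebraically confined" should be read as: there is no identity forcing $\mathrm{Col}(G_{\mathrm{J}})\subseteq\mathcal{V}^0_{\mathcal{T}}$ for all parameter values. To prove this, I will exhibit a witness configuration. Take $\mathcal{P}_\theta,\mathcal{P}_{\mathrm{EMA}}$ near the identity, $W_V^{(v)}=I$, and affinities concentrated on a single unmasked $j$. Then $g_j\approx c\,(I-\bar{\hat z}\bar{\hat z}^\top)\bar z_{\mathrm{tgt},o}$, and choosing $x_o$ with a component orthogonal to $\mathcal{V}^0_{\mathcal{T}}$ makes $g_j\notin\mathcal{V}^0_{\mathcal{T}}$ (using $\dim\mathcal{V}^0_{\mathcal{T}}\le N-N_c<C$). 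Since $G_{\mathrm{J}}$ is real-analytic in the parameters, a single witness implies the containment fails on an open dense set. The main obstacle is this step. The affinity-path term and the projection $(I-\bar{\hat z}\bar{\hat z}^\top)$ could conspire to cancel the orthogonal component, for instance when $\hat z$ aligns with $z_{\mathrm{tgt}}$ at the optimum. I would handle this by choosing the witness strictly away from the optimum, with $\hat z$ orthogonal to $z_{\mathrm{tgt}}$, and affinity logits constant in $x_j$ so the affinity-path term vanishes.
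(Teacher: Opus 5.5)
Your first two steps are the paper's proof. It likewise computes $h_o$ through the sphere-projection Jacobian, notes that the target enters only via $\mathcal{P}_{\mathrm{EMA}}(x_o)$, and splits the pullback to unmasked tokens into a value path and an affinity path. Your SmoothL1 term, without the paper's extra inner factor $J_{\bar z_{\mathrm{tgt},o}}$, is the correct derivative. Your claim that only the \emph{target} is text-free is also more accurate than the paper's ``function of $(X,\theta)$ alone'', since $\mathcal{A}_{vt}$ feeds text into $\hat z_{\mathrm{pred},o}$.

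The real difference is the last step. The paper reads off non-confinement from the absence of a factorization through $W_KT^{(i)}$ with zero-sum coefficients. You instead exhibit a column outside $\mathcal{V}^0_{\mathcal{T}}$, which turns the assertion into a proof. It also exposes the hypothesis $N-N_c<C$ that the lemma silently needs: if $\mathcal{V}^0_{\mathcal{T}}=\mathbb{R}^C$, the lemma is false.

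The witness needs some tidying. Your two devices for the affinity path conflict, since $Z$-independent logits cannot concentrate on one $j$. Either one suffices on its own. Near one-hot attention drives the softmax Jacobian $\mathcal{A}(\delta-\mathcal{A})$ to zero in the limit. Alternatively, zero predictor query/key weights give uniform affinities, so every unmasked column is a multiple of $J_{\mathcal{P}_\theta}(x_j)^\top(W_V^{(v)})^\top h_o$. Because $\hat z_{\mathrm{pred},o}$ does not involve the masked $x_o$, you may then choose $x_o$ to place $h_o$ outside $\mathcal{V}^0_{\mathcal{T}}$.

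Dropping the SmoothL1 part of $h_o$ in your approximation is harmless. On its quadratic branch it is a positive multiple of the cosine part, because $(I-\bar{\hat z}\bar{\hat z}^\top)\bar{\hat z}=0$.

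Finally, discard the open-dense upgrade. SmoothL1 is only piecewise analytic, so the real-analyticity argument fails as stated. It is also unnecessary: a single witness already shows that no identity forces $\mathrm{Col}(G_{\mathrm{J}})\subseteq\mathcal{V}^0_{\mathcal{T}}$, which is all the lemma asserts.
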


\begin{proof}
Let $p_o := \hat{z}_{\mathrm{pred},o}$ and $\ell(p,z)$ denote the per-token
loss in \eqref{eq:jepa_loss}, and let
$J_{\bar{z}} := \partial \bar{z}/\partial z = (I-\bar{z}\bar{z}^\top)/\|z\|$
be the sphere-projection Jacobian. The driving signal is
\begin{equation}
h_o \;=\; \nabla_{p_o}\mathcal{L}_{\mathrm{J}}
\;=\; \frac{1}{|\Omega|}\, J_{\bar{p}_o}^\top\,
\Big[ -\bar{z}_{\mathrm{tgt},o} + \beta\, J_{\bar{z}_{\mathrm{tgt},o}}\,
\phi_{\beta}\big(\bar{p}_o-\bar{z}_{\mathrm{tgt},o}\big)\Big] ,
\end{equation}
where $\phi_{\beta}$ is the SmoothL1 derivative applied componentwise.
Since $z_{\mathrm{tgt},o}=\mathcal{P}_{\mathrm{EMA}}(x_o)$, every term in
$h_o$ is a function of $(X,\theta)$ alone. Chain rule through
\eqref{eq:block_attn} then gives the field on unmasked tokens:
\begin{equation}
G_{\mathrm{J}}\big|_{\Omega^c}
\;=\; \underbrace{(W_V^{(v)})^\top\!\! \sum_{o\in\Omega} h_o\,
(\mathcal{A}_{vv,\,o,\cdot})^\top}_{\text{value path}}
\;+\; \underbrace{\sum_{o\in\Omega} \Big(\frac{\partial
\mathcal{A}_{o,\cdot}}{\partial Z}\Big)^{\!\top}
\big(h_o\, (p_o^{(\mathrm{val})})^\top\big)}_{\text{affinity path}} ,
\end{equation}
with $p_o^{(\mathrm{val})}$ the value vectors aggregated in
\eqref{eq:block_attn}. Both terms, and hence
$G_{\mathrm{J}}^{(x)}=J_\theta^\top G_{\mathrm{J}}$, are functions of
$(X,\Omega,\epsilon,\theta,\phi)$. In contrast to
Lemmas~\ref{lem:contrast} and \ref{lem:mach}, no term is a structured
linear combination of text vectors: $\mathrm{Col}(G_{\mathrm{J}})$ is not
algebraically confined to $\mathcal{V}_{\mathcal{T}}^{0}$.
\end{proof}

Write $G_{\mathrm{m}}:=G(\mathcal{L}_{\mathrm{m}})$ and
$G_{\mathrm{J}}:=G(\mathcal{L}_{\mathrm{J}})$, so that the joint field of
$\mathcal{L}_{\mathrm{tot}}$ is
\begin{equation}
G \;=\; G_{\mathrm{m}}+\alpha G_{\mathrm{J}} .
\end{equation}
Centering the JEPA field over the mask noise defines the fluctuation and
its second moment,
\begin{equation}
\label{eq:fluctuation}
\xi_{\mathrm{J}}
\;:=\;
G_{\mathrm{J}}-\mathbb{E}_{\Omega,\epsilon}\big[G_{\mathrm{J}}\,\big|\,X,T\big],
\qquad
\Gamma_{\mathrm{J}}
\;:=\;
\mathbb{E}\big[\xi_{\mathrm{J}}\xi_{\mathrm{J}}^\top\big]\succeq0 .
\end{equation}
The conditional mean is absorbed by the deterministic discriminative
component; $\Gamma_{\mathrm{J}}$ measures the part of the JEPA signal
that varies under mask resampling and therefore cannot cancel against
$G_{\mathrm{m}}$.

\begin{assumption}[Nondegenerate JEPA fluctuation]
\label{ass:nondeg}
The mask-noise fluctuation $\xi_{\mathrm{J}}$ is not almost surely
confined to any hyperplane: for every unit $u$,
$\mathbb{E}[(u^\top\xi_{\mathrm{J}})^2] > 0$.
\end{assumption}

\begin{lemma}[Full-support spectral floor]
\label{lem:jepa_full}
Under Assumption~\ref{ass:nondeg},
$\Gamma_{\mathrm{J}}\succeq c_{\mathrm{J}}I$ with
$c_{\mathrm{J}}:=\lambda_{\min}(\Gamma_{\mathrm{J}})>0$, and the joint
field of $\mathcal{L}_{\mathrm{tot}}$ satisfies
\begin{equation}
k^\top\,\mathbb{E}\big[GG^\top\big]\,k
\;\ge\;\alpha^{2}\,k^\top\Gamma_{\mathrm{J}}k
\;\ge\;\alpha^{2}c_{\mathrm{J}}
\qquad\text{for every unit } k .
\end{equation}
\end{lemma}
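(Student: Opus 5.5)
The plan has two parts: a finite-dimensional compactness argument for the floor $c_{\mathrm{J}}>0$, and a cross-term cancellation for the joint bound.

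\emph{The floor $c_{\mathrm{J}}>0$.} Since $\xi_{\mathrm{J}}\in\mathbb{R}^{C\times M}$, we have $u^\top\Gamma_{\mathrm{J}}u=\mathbb{E}\big[\|u^\top\xi_{\mathrm{J}}\|^2\big]$. I read the scalar expression in Assumption~\ref{ass:nondeg} as this summed over token columns. The map $u\mapsto u^\top\Gamma_{\mathrm{J}}u$ is a continuous quadratic form, assuming $\Gamma_{\mathrm{J}}$ has finite entries, i.e.\ the fluctuation is square-integrable. The unit sphere $\mathbb{S}^{C-1}$ is compact, so the form attains its minimum there. That minimum equals $\lambda_{\min}(\Gamma_{\mathrm{J}})$ by Rayleigh--Ritz, and the assumption says it is positive at every point. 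Hence $\Gamma_{\mathrm{J}}\succeq c_{\mathrm{J}}I$ with $c_{\mathrm{J}}>0$.

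\emph{The joint bound.} Decompose $G=\bar G+\alpha\xi_{\mathrm{J}}$ with $\bar G:=G_{\mathrm{m}}+\alpha\,\mathbb{E}_{\Omega,\epsilon}[G_{\mathrm{J}}\mid X,T]$. The key structural point, and the one I would state carefully, is that $\bar G$ is measurable with respect to $(X,T)$ and the network parameters only. For $G_{\mathrm{m}}$ this holds because the mask and noise $(\Omega,\epsilon)$ enter only the JEPA branch and not the \texttt{mACH} forward map~\eqref{eq:mach_forward}. By construction $\mathbb{E}[\xi_{\mathrm{J}}\mid X,T]=0$. Expanding,
\begin{equation*}
k^\top\mathbb{E}[GG^\top]k=\mathbb{E}\|k^\top\bar G\|^2+2\alpha\,\mathbb{E}\big[k^\top\bar G\,\xi_{\mathrm{J}}^\top k\big]+\alpha^2\,\mathbb{E}\|k^\top\xi_{\mathrm{J}}\|^2 .
\end{equation*}
By the tower property the cross term is $2\alpha\,\mathbb{E}\big[k^\top\bar G\,\mathbb{E}[\xi_{\mathrm{J}}\mid X,T]^\top k\big]=0$. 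The first term is nonnegative. Therefore $k^\top\mathbb{E}[GG^\top]k\ge\alpha^2k^\top\Gamma_{\mathrm{J}}k\ge\alpha^2c_{\mathrm{J}}$.

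\emph{Main obstacle.} The delicate step is the cross-term cancellation. It requires that nothing in $G_{\mathrm{m}}$ depends on the mask noise, and that the outer expectation in $\mathbb{E}[GG^\top]$ is taken jointly over $(X,T,\Omega,\epsilon)$, so that the tower property applies. I would make both explicit. First, the negatives resampled per iteration are part of $T$, not of $(\Omega,\epsilon)$. Second, the EMA teacher and the student parameters are treated as fixed within an iteration, so they are conditioned on. If this conditioning fails, I would fall back on a variance decomposition conditional on all non-mask randomness, which gives the same inequality.
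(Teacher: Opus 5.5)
Your proposal is correct and follows the paper's proof essentially step for step. The paper also gets $c_{\mathrm{J}}>0$ from positive definiteness of $\Gamma_{\mathrm{J}}$; your compactness and Rayleigh--Ritz argument simply spells out why, in finite dimension, positivity on every unit vector forces $\lambda_{\min}(\Gamma_{\mathrm{J}})>0$. For the joint bound, the paper uses the same split into a part that is constant given $(X,T)$ plus $\alpha\xi_{\mathrm{J}}$: it drops the nonnegative deterministic term and kills the cross term via $\mathbb{E}[\xi_{\mathrm{J}}\mid X,T]=0$. Your explicit remarks on square-integrability, on reading the assumption as $\mathbb{E}\|u^\top\xi_{\mathrm{J}}\|^2>0$ for the matrix-valued field, and on $G_{\mathrm{m}}$ not depending on $(\Omega,\epsilon)$ just make explicit what the paper uses implicitly.
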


\begin{proof}
By definition $\Gamma_{\mathrm{J}}\succeq0$, and
$u^\top\Gamma_{\mathrm{J}}u=\mathbb{E}[(u^\top\xi_{\mathrm{J}})^2]>0$
for every unit $u$ by Assumption~\ref{ass:nondeg}; hence
$\Gamma_{\mathrm{J}}\succ0$ and $c_{\mathrm{J}}>0$.
For the joint field, conditional on $(X,T)$ the vector
$k^\top(G_{\mathrm{m}}+\alpha\,\mathbb{E}_{\Omega,\epsilon}G_{\mathrm{J}})$
is constant and $\mathbb{E}_{\Omega,\epsilon}[\xi_{\mathrm{J}}\,|\,X,T]=0$, so
\begin{equation}
k^\top\mathbb{E}\big[GG^\top\big]k
=\mathbb{E}_{X,T}\Big[
\underbrace{\big\|k^\top\big(G_{\mathrm{m}}+\alpha\,\mathbb{E}_{\Omega,\epsilon}G_{\mathrm{J}}\big)\big\|^2}_{\ge\,0}
+\alpha^{2}\,k^\top\mathrm{Cov}_{\Omega,\epsilon}\big(G_{\mathrm{J}}\,\big|\,X,T\big)\,k\Big]
\ge\alpha^{2}k^\top\Gamma_{\mathrm{J}}k
\ge\alpha^{2}c_{\mathrm{J}} .
\end{equation}
The cross term vanishes since $\mathbb{E}_{\Omega,\epsilon}[\xi_{\mathrm{J}}\,|\,X,T]=0$.
\end{proof}

\subsection{Comparison}
\label{subsec:comparison}

\begin{theorem}[Capacity comparison]
\label{thm:comparison}
At steady state, the reachable gradient subspaces of the three objectives are $\mathrm{span}\{t^{(i)}\}$, $\mathcal{V}_{\mathcal{T}}^{0}$, and $\mathbb{R}^{C}$, with dimension budgets
\begin{equation}
\label{eq:ladder_app}
\dim\le N_c
\quad(\text{contrastive});\qquad
\dim\le N-N_c
\quad(\texttt{mACH});\qquad
\dim=C\ \text{a.s.}
\quad(\text{dual-stream}) .
\end{equation}
Whenever $N_c<N/2$ and $N-N_c<C$ (both hold in our data), the budgets are strictly increasing, forming the ladder $N_c<N-N_c<C$. Consequently, only the dual-stream objective is almost surely free of alignment-blind directions---the only paradigm that preserves alignment capacity in every direction.
\end{theorem}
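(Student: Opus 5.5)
The theorem follows by assembling the three subspace lemmas with the decay lemma; essentially no new computation is needed.

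\emph{Step 1: the three dimension budgets.} The first two budgets are read off directly. Lemma~\ref{lem:contrast} places every column of $G_{\mathrm{contrast}}$ in $\mathrm{span}\{t^{(i)}\}_{i=1}^{N_c}$, which has dimension at most $N_c$. Lemma~\ref{lem:mach} places every column of $G_{\mathrm{mACH}}$ in $\mathcal{V}_{\mathcal{T}}^{0}$, with $\dim\mathcal{V}_{\mathcal{T}}^{0}\le\sum_i(L_i-1)=N-N_c$. For the dual-stream objective, Lemma~\ref{lem:jepa_full} gives $k^\top\mathbb{E}[GG^\top]k\ge\alpha^2c_{\mathrm{J}}>0$ for every unit $k$. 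Hence $\mathbb{E}[GG^\top]\succ0$, and the reachable gradient subspace, defined as the range of the gradient second moment, is all of $\mathbb{R}^C$.

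To make ``reachable subspace'' precise, I would define it uniformly as $\mathrm{Range}(\Gamma(\mathcal{L}))$. Since $\mathrm{Range}(\mathbb{E}[GG^\top])$ is the smallest subspace containing the columns of $G$ almost surely, the two discriminative lemmas bound this range from above. Lemma~\ref{lem:jepa_full} then shows the range is full for the joint field.

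\emph{Step 2: the ladder.} This is elementary arithmetic. $N_c<N-N_c$ is equivalent to $N_c<N/2$, and $N-N_c<C$ is the second hypothesis. Chaining the two gives $N_c<N-N_c<C$.

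\emph{Step 3: the alignment-blind consequence.} For the two discriminative heads, the dimension bound below $C$ means there is a unit $k$ orthogonal to the reachable subspace. Then $k^\top G=\mathbf{0}$ almost surely, so Lemma~\ref{lem:decay} gives $\mathrm{cap}(k)\le e^{-2\lambda t}\mathrm{cap}_0(k)\to0$. At steady state these objectives therefore have a nontrivial alignment-blind subspace, of codimension at least $C-N_c$ and $C-(N-N_c)$ respectively.

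For the dual-stream objective, no unit $k$ satisfies $k^\top G=0$ almost surely, since that would force $k^\top\mathbb{E}[GG^\top]k=0$, contradicting Lemma~\ref{lem:jepa_full}. So the decay mechanism of Lemma~\ref{lem:decay} annihilates no direction.

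\emph{Main obstacle.} The delicate step is turning ``no direction is gradient-orthogonal'' into ``$\mathrm{cap}(k)>0$ at steady state for every $k$.'' Lemma~\ref{lem:decay} only proves the decay direction, not the converse.

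I would first try to write the variance dynamics along $k$ as $\dot e=-2\lambda e+\text{(gradient forcing)}$ and argue that the persistent stochastic excitation keeps the stationary variance bounded below. The excitation has second moment at least $\alpha^2c_{\mathrm{J}}$ in every direction by Lemma~\ref{lem:jepa_full}, and the fluctuation $\xi_{\mathrm{J}}$ cannot cancel against $G_{\mathrm{m}}$. An Ornstein--Uhlenbeck-type computation would then give a lower bound of order $\alpha^2c_{\mathrm{J}}\,k^\top Bk/(2\lambda)$, assuming $B\succ0$ on the relevant range.

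If that bound is too heavy to establish, I would state the conclusion in the weaker form the theorem's wording supports: ``free of alignment-blind directions'' means free of gradient-unsustained directions. That is exactly the contrapositive content certified by Lemmas~\ref{lem:decay} and~\ref{lem:jepa_full}, holding almost surely under Assumption~\ref{ass:nondeg}.
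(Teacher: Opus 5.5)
Your proposal is essentially the paper's proof: Lemmas~\ref{lem:contrast}, \ref{lem:mach} and \ref{lem:decay} handle the discriminative heads, and the dual stream uses the equivalence $k\in\mathrm{null}(\Gamma)\iff k^\top G=\mathbf{0}$ a.s.\ together with $\Gamma\succeq\alpha^{2}c_{\mathrm{J}}I$ from Lemma~\ref{lem:jepa_full}; the paper then only asserts positive capacity ``under the sustained-excitation mechanism,'' so the converse of Lemma~\ref{lem:decay} you flag is unproved there too, and your weaker reading is what is actually established. Two cautions on your additions: your Ornstein--Uhlenbeck bound $k^\top Bk$ is ill-typed ($B$ is $M\times M$, acting on token indices), and $\Gamma\succ0$ controls only the uncentered second moment of $k^\top G$, not its spatially centered part, so by itself it cannot lower-bound $\mathrm{cap}(k)$; and $\mathrm{Range}(\mathbb{E}[GG^\top])$ is generally not bounded by the $N_c$ and $N-N_c$ budgets unless the expectation holds each image's expressions fixed, because averaging over images and resampled negatives spans all visited $t^{(i)}$ --- the same per-iteration versus trajectory mismatch that the paper's own appeal to Lemma~\ref{lem:decay} glosses over.
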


\begin{proof}
For the two discriminative heads, Lemmas~\ref{lem:contrast} and
\ref{lem:mach} confine the gradient fields to the stated subspaces; every
direction in the complements then satisfies the premise of
Lemma~\ref{lem:decay} and decays to zero capacity.
For the dual-stream objective, write $\Gamma:=\mathbb{E}[GG^\top]\succeq0$
and note that for any unit $k$,
\begin{equation}
k\in\mathrm{null}(\Gamma)
\;\Longleftrightarrow\;
k^\top\Gamma k=\mathbb{E}\big\|k^\top G\big\|^2=0
\;\Longleftrightarrow\;
k^\top G=\mathbf{0}\ \ \text{a.s.},
\end{equation}
so the premise of Lemma~\ref{lem:decay} holds exactly on
$\mathrm{null}(\Gamma)$. Lemma~\ref{lem:jepa_full} gives
\begin{equation}
\Gamma\succeq\alpha^{2}c_{\mathrm{J}}I\succ0
\;\Longleftrightarrow\;
\mathrm{Col}(\Gamma)=\mathbb{R}^{C},
\qquad
\mathrm{null}(\Gamma)=\{\mathbf{0}\}.
\end{equation}
Hence the reachable gradient subspace has dimension $C$, and no
direction satisfies the decay premise at any iteration; under the
sustained-excitation mechanism of Section~\ref{subsec:dynamics}, every
direction retains positive capacity.
\end{proof}

\begin{corollary}[Representation diversity]
\label{cor:diversity}
Since $\Xi_X\succeq0$,
\begin{equation}
\mathrm{cap}(k)=0
\;\Longleftrightarrow\;
k^\top\Xi_Xk=0
\;\Longleftrightarrow\;
k\in\mathrm{null}(\Xi_X),
\qquad
\mathrm{rank}(\Xi_X)=\dim(\text{alignment-active subspace}).
\end{equation}
At steady state, the feature rank therefore obeys the ladder of
Theorem~\ref{thm:comparison}: $\mathrm{rank}(\Xi_X)\le N_c$ under
contrastive training, $\le N-N_c$ under \texttt{mACH}, and $=C$ under
the dual-stream objective (Assumption~\ref{ass:nondeg}).
\end{corollary}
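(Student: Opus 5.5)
The plan is to split the corollary into two parts: a purely linear-algebraic identity, and then a transfer of the steady-state conclusions of Theorem~\ref{thm:comparison} onto $\Xi_X$.

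\textbf{Step 1: the linear-algebraic identity.} Since $\Xi_X\succeq0$, it has a symmetric square root, so
\[
k^\top\Xi_Xk=\|\Xi_X^{1/2}k\|^2 .
\]
This vanishes exactly when $\Xi_X^{1/2}k=0$, which is equivalent to $\Xi_Xk=0$. Combined with Definition~\ref{def:cap}, this gives the chain $\mathrm{cap}(k)=0\Leftrightarrow k\in\mathrm{null}(\Xi_X)$. Hence the alignment-blind set is the linear subspace $\mathrm{null}(\Xi_X)$. Because $\Xi_X$ is symmetric, the alignment-active subspace is its orthogonal complement $\mathrm{Col}(\Xi_X)$, and rank--nullity gives $\mathrm{rank}(\Xi_X)=\dim(\text{alignment-active subspace})$.

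\textbf{Step 2: the two discriminative heads.} Let $\mathcal{S}$ be the reachable subspace: $\mathrm{span}\{t^{(i)}\}$ for contrastive training (Lemma~\ref{lem:contrast}) and $\mathcal{V}_{\mathcal{T}}^{0}$ for \texttt{mACH} (Lemma~\ref{lem:mach}).
\begin{itemize}
\item Every unit $k\in\mathcal{S}^\perp$ satisfies $k^\top G=\mathbf{0}$ almost surely.
\item Lemma~\ref{lem:decay} then drives $\mathrm{cap}(k)\to0$ along such $k$.
\item At steady state, therefore, $\mathcal{S}^\perp\subseteq\mathrm{null}(\Xi_X)$.
\item Taking complements, $\mathrm{Col}(\Xi_X)\subseteq\mathcal{S}$, so $\mathrm{rank}(\Xi_X)\le\dim\mathcal{S}$.
\end{itemize}
This yields the bounds $N_c$ and $N-N_c$ respectively.

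One point needs care: the subspace $\mathcal{S}$ can drift across iterations. I would interpret "steady state" as the text configuration having stabilized, so that a fixed $\mathcal{S}$ contains all gradient columns along the trajectory. This is the same convention Theorem~\ref{thm:comparison} uses, and the premise of Lemma~\ref{lem:decay} requires it.

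\textbf{Step 3: the dual-stream objective.} Here the claim is $\mathrm{rank}(\Xi_X)=C$, and this is the main obstacle. Theorem~\ref{thm:comparison} together with Lemma~\ref{lem:jepa_full} only shows that no direction satisfies the decay premise, i.e.\ $\Gamma\succeq\alpha^2c_{\mathrm J}I$. That is a statement about the gradient second moment, not about the feature covariance.

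To close this gap, I would use the dynamics $\dot X=-GB-\lambda X$ directly. Projecting onto a unit $k$ gives
\[
\tfrac{d}{dt}\,k^\top x_m = -k^\top[GB]_m-\lambda\,k^\top x_m .
\]
At a stationary distribution the variance of $k^\top x_m$ balances the injected fluctuation against the decay, and I would aim for a bound of the form
\[
\mathrm{cap}(k)\gtrsim \frac{\alpha^2}{2\lambda}\,k^\top\mathbb{E}\big[\xi_{\mathrm J}B\,B^\top\xi_{\mathrm J}^\top\big]k ,
\]
obtained via the Ornstein--Uhlenbeck identity for the stationary variance.

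This bound is strictly positive only if $B$ does not annihilate the fluctuation. I would therefore add, or invoke as part of the sustained-excitation mechanism, the requirement that $B\succ0$ on the relevant token span. Assumption~\ref{ass:nondeg} then gives positivity for every $k$, so $\mathrm{null}(\Xi_X)=\{\mathbf 0\}$ and $\mathrm{rank}(\Xi_X)=C$.

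If that extra condition cannot be justified, the fallback is to state the dual-stream case as "$\mathrm{cap}(k)>0$ for all $k$ under the sustained-excitation mechanism." That is exactly the conclusion of Theorem~\ref{thm:comparison}, and Step 1 converts it to full rank.
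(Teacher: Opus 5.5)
Your Steps~1 and~2, together with the fallback in Step~3, reproduce the paper's argument. The paper gives this corollary no separate proof. It is meant to follow from the positive-semidefinite identity plus Theorem~\ref{thm:comparison}. That theorem's proof treats the two discriminative heads exactly as you do: directions outside the reachable subspace satisfy the premise of Lemma~\ref{lem:decay} and lose all capacity. For the dual stream, it passes from $\Gamma\succeq\alpha^{2}c_{\mathrm{J}}I$ to positive capacity by appeal to the ``sustained-excitation mechanism.''

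Your diagnosis of that last step is right. Lemma~\ref{lem:decay} only says that a vanishing gradient along $k$ forces decay, and the paper silently uses the unproved converse. So, taking the fallback, your proof is exactly as rigorous as the paper's. The same holds for your fixed-$\mathcal{S}$ reading of ``steady state.'' The theorem tacitly needs it too, because negatives are resampled every iteration and the premise of Lemma~\ref{lem:decay} must hold along the whole trajectory.

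The Ornstein--Uhlenbeck repair is a genuinely stronger route, but as sketched it does not yet deliver $\mathrm{rank}(\Xi_X)=C$. The capacity in Definition~\ref{def:cap} is a \emph{spatial}, token-centered variance. The forcing that matters is therefore $k^\top\xi_{\mathrm{J}}B\,(I-\tfrac{1}{M}\mathbf{1}\mathbf{1}^\top)$, not $k^\top\xi_{\mathrm{J}}B$. Your bound through $\mathbb{E}[\xi_{\mathrm{J}}BB^\top\xi_{\mathrm{J}}^\top]$ controls the ensemble variance of each individual $k^\top x_m$, which is a different quantity.

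Assumption~\ref{ass:nondeg} together with $B\succ0$ does not exclude a fluctuation that is common-mode across tokens along $k$. For example, take $k^\top\xi_{\mathrm{J}}=c\,\mathbf{1}^\top$ with a random scalar $c$ and $B=I$. Then every $k^\top x_m$ fluctuates, but the centered forcing is identically zero. The fluctuation then contributes nothing to $\mathrm{Var}_m(k^\top x_m)$.

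To make this route work you would need three further ingredients:
\begin{enumerate}
\item A centered version of the nondegeneracy assumption.
\item A treatment of the state-dependent drift $G_{\mathrm{m}}+\alpha\,\mathbb{E}_{\Omega,\epsilon}[G_{\mathrm{J}}\mid X,T]$, which alters the relaxation rate.
\item The step size. The flow $\dot X=-GB-\lambda X$ is deterministic, and the mask noise only becomes a diffusion term in the discrete-time or SDE picture.
\end{enumerate}
Without these, the constant $\alpha^{2}/(2\lambda)$ is not justified, and even positivity needs some regularity bound on that drift.
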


\begin{corollary}[Novel expressions]
\label{cor:ood}
If an out-of-distribution expression produces a key direction $k_{\perp}$
outside the reachable subspace visited during training, then at steady
state
\begin{equation}
\mathrm{cap}_{\mathrm{contrast}}(k_{\perp})=\mathrm{cap}_{\mathrm{mACH}}(k_{\perp})=0,
\qquad
\mathrm{cap}_{\mathrm{tot}}(k_{\perp})>0
\quad(\text{Assumption~\ref{ass:nondeg}}),
\end{equation}
the first two by Lemma~\ref{lem:decay}, the last by
Theorem~\ref{thm:comparison}: such expressions are alignment-blind for
both discriminative-only heads and retain positive capacity under the
dual-stream objective.
\end{corollary}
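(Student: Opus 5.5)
The statement to prove is Corollary~\ref{cor:ood}. The plan is to split it into its two halves and read each off an earlier result. No new estimate is needed; the work is in checking that the hypotheses of those results are met for the specific direction $k_\perp$.

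\textbf{Discriminative heads.} Fix a unit $k_\perp$ orthogonal to the reachable subspace visited during training. For the pooled contrastive head this subspace is the union over iterations of $\mathrm{span}\{t^{(i)}\}$; for \texttt{mACH} it is the union of the centered key subspaces $\mathcal{V}_{\mathcal{T}}^{0}$. The hypothesis should be read as orthogonality to the closed linear span of these unions, since plain non-membership would not suffice.

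\begin{enumerate}
\item Lemma~\ref{lem:contrast} places every column of $G_{\mathrm{contrast}}$ in $\mathrm{span}\{t^{(i)}\}$ at every iteration.
\item Lemma~\ref{lem:mach} places every column of $G_{\mathrm{mACH}}$ in $\mathcal{V}_{\mathcal{T}}^{0}$ at every iteration.
\item Hence $k_\perp^\top G=\mathbf{0}$ almost surely along the whole trajectory, which is exactly the premise of Lemma~\ref{lem:decay}.
\item Lemma~\ref{lem:decay} then gives $\mathrm{cap}(k_\perp)\le e^{-2\lambda t}\,\mathrm{cap}^{(0)}(k_\perp)$, so letting $t\to\infty$ (steady state) yields $\mathrm{cap}_{\mathrm{contrast}}(k_\perp)=\mathrm{cap}_{\mathrm{mACH}}(k_\perp)=0$.
\end{enumerate}

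\textbf{Dual-stream objective.} Here I use Lemma~\ref{lem:jepa_full} together with Theorem~\ref{thm:comparison}. Under Assumption~\ref{ass:nondeg}, $k_\perp^\top\mathbb{E}[GG^\top]k_\perp\ge\alpha^2c_{\mathrm{J}}>0$, so $k_\perp\notin\mathrm{null}(\Gamma)$ and the decay premise fails for $k_\perp$. Theorem~\ref{thm:comparison} then states that every direction retains positive capacity at steady state. Corollary~\ref{cor:diversity}, via $\mathrm{rank}(\Xi_X)=C$, converts this into $k_\perp^\top\Xi_X k_\perp>0$, i.e.\ $\mathrm{cap}_{\mathrm{tot}}(k_\perp)>0$.

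\textbf{Main obstacle.} The difficulty lies in the second half. Lemma~\ref{lem:decay} supplies only an upper bound, and failure of its premise does not by itself give a positive lower bound on variance. Positivity has to come from the sustained-excitation mechanism $\dot X=-GB-\lambda X$.

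My first attempt would be to linearize at the stationary point. The steady-state covariance then satisfies a Lyapunov-type balance, $2\lambda\,\Xi_X \approx$ the injected gradient noise covariance, which is $\succeq \alpha^2 B$-weighted $\Gamma_{\mathrm{J}}$. This gives $k_\perp^\top\Xi_Xk_\perp\gtrsim \alpha^2 c_{\mathrm{J}}\,\lambda_{\min}(B)/(2\lambda)$. That bound requires $B\succ0$ on the relevant span, and I would state this as the additional nondegeneracy hypothesis implicitly used in Theorem~\ref{thm:comparison}.
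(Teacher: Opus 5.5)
Your reduction is exactly the paper's. The paper justifies the corollary in one line: the two vanishing capacities come from Lemma~\ref{lem:decay}, once Lemmas~\ref{lem:contrast} and~\ref{lem:mach} confine the gradient columns, and positivity comes from Theorem~\ref{thm:comparison} under Assumption~\ref{ass:nondeg}. Both of your refinements are correct and sharpen the paper. First, ``outside the reachable subspace'' has to mean orthogonal to the span of the visited subspaces, since a direction merely not contained in that span keeps the capacity of its component inside it. Second, failure of the premise of Lemma~\ref{lem:decay} yields no lower bound, so Theorem~\ref{thm:comparison} only asserts positivity ``under the sustained-excitation mechanism'' rather than deriving it.

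The Lyapunov patch you propose for that missing lower bound goes beyond the paper, and as written it does not close the gap. Along $k_\perp$ the flow reads $\dot y=-BG^\top k_\perp-\lambda y$ with $y=X^\top k_\perp\in\mathbb{R}^M$. In the continuous-time limit a zero-mean fluctuation averages out of this ODE, so any floor coming from $\xi_{\mathrm{J}}$ is a finite-step-size effect; in SGD with step $\eta$ it carries a factor $\eta$ that your bound lacks.

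More importantly, capacity is a \emph{centered} spatial variance. The noise-driven contribution to its expectation is of order $\tfrac{\eta\alpha^2}{2\lambda M}\,\mathbb{E}\|PB\,\xi_{\mathrm{J}}^\top k_\perp\|^2$ with $P=I-\tfrac{1}{M}\mathbf{1}\mathbf{1}^\top$. By contrast, $k_\perp^\top\Gamma_{\mathrm{J}}k_\perp=\mathbb{E}\|\xi_{\mathrm{J}}^\top k_\perp\|^2$ is uncentered. Since $P\mathbf{1}=\mathbf{0}$, the weighting $BPB$ is singular for every $B$, so no condition like $B\succ0$ converts $c_{\mathrm{J}}$ into a capacity bound. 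Assumption~\ref{ass:nondeg} does not exclude a fluctuation pattern lying in $\ker(PB)$. An example is a token-constant pattern when $B$ preserves constants, which only shifts $k_\perp^\top\bar x$.

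What you would actually need is a centered, $B$-filtered version of that assumption, namely $PB\,\xi_{\mathrm{J}}^\top k_\perp\neq\mathbf{0}$ with positive probability. Note also that $B\succ0$ itself fails for the paper's own example $B=H^\top H$ whenever $M$ exceeds the width of $H$.
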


\section*{Appendix C: Curation Pipeline, Prompt Specifications, and Empirical Properties of O365-Caption}
\label{sec:appendix_o365_details}

In this section, we present the comprehensive engineering specifications, algorithmic boundaries, structural system prompts, and deep empirical properties that characterize the curation of the \texttt{O365-Caption} dataset. To facilitate full reproducibility and provide an exhaustive disclosure of our data-side contributions, our exposition is organized into three complementary subsections: 
(1) \textbf{The Three-Stage Generative Pipeline}, detailing the algorithmic taxonomy mapping and automated geometric guardrails; 
(2) \textbf{Prompt Specifications}, providing the exact, unedited English and Chinese systemic prompt templates used to instruct our Multimodal Large Language Models (MLLMs); and 
(3) \textbf{Linguistic and Statistical Properties}, delivering a global quantitative deconstruction of our dataset's vocabulary density, syntax length, and part-of-speech distributions against standard visual grounding baselines. 

Through this multi-dimensional documentation, we demonstrate how \texttt{O365-Caption} systematically upgrades the rigid, discrete single-word category tags of legacy detection assets into high-density, context-aware, and open-vocabulary referring expressions suitable for industrial-scale generalist pre-training.

\begin{figure*}[!htbp]
    \centering
    \includegraphics[width=0.8\linewidth]{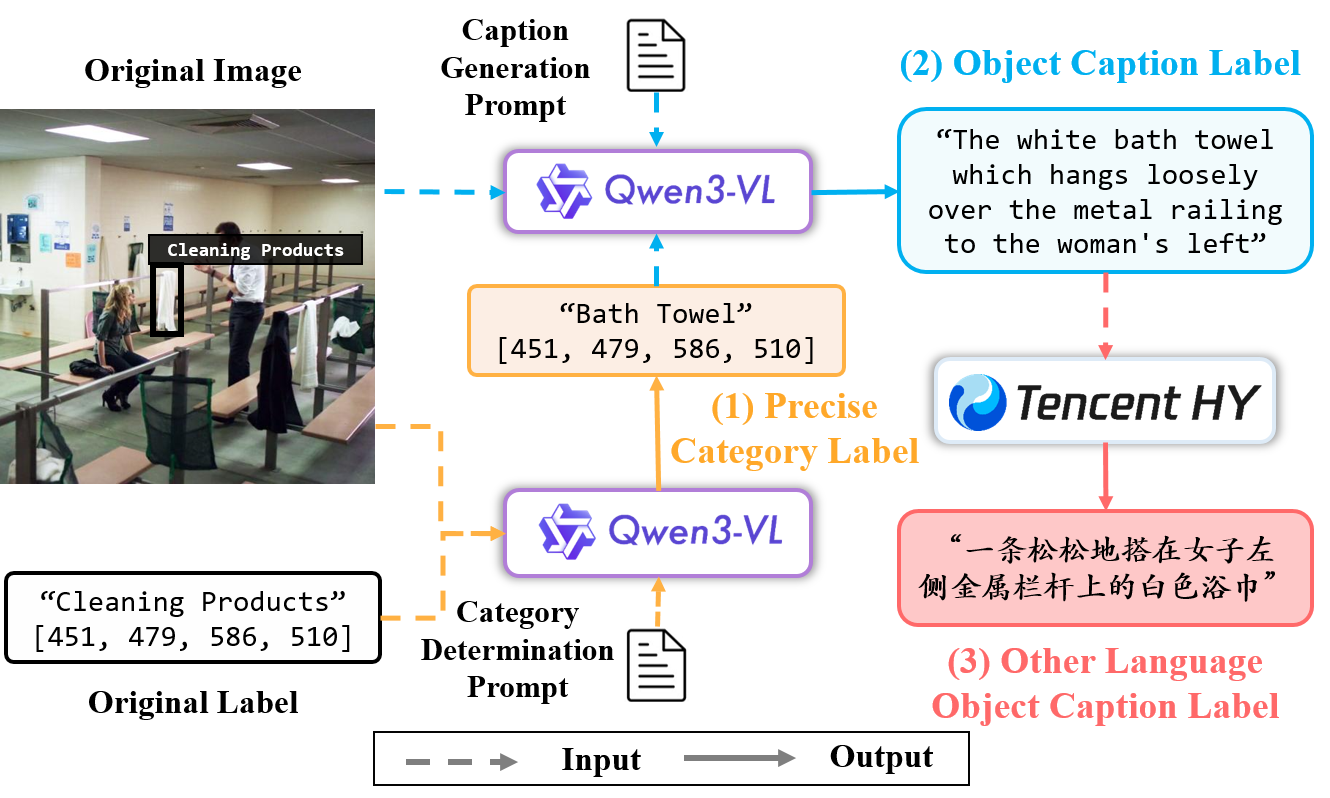}
    \caption{\textbf{The detailed three-stage data generation pipeline of O365-Caption.}(An enlarged detailed view of \emph{Fig.~3(a)} from the main text.) The framework sequentially executes coarse-to-fine semantic disambiguation via Qwen3-VL-2B, dense context-aware object-level captioning via Qwen3-VL-32B, and cross-lingual translation via the Tencent HY-MT-1.5-7B engine.}
    \label{fig:o365pipeline}
\end{figure*}

\subsection{The Pipeline of Constructing Objects365-Caption}

Crucially, a definitive advantage of this three-stage pipeline is its dataset-agnostic versatility; the underlying framework is completely decoupled from Objects365-specific topologies. It can be seamlessly deployed across virtually any legacy object detection asset (such as COCO, OpenImages, or LVIS) to systematically upgrade rigid, closed-set categorical annotations into rich, open-vocabulary grounding tokens without requiring expensive manual re-labeling. The transformation workflow is strictly compartmentalized into three sequential stages, accompanied by automated geometric guardrails.

\textbf{Stage 1: Coarse-to-Fine Semantic Disambiguation} 
The original vocabulary of Objects365 frequently relies on highly coarse or generalized container tags (e.g., ``Cleaning Products'' or ``Other Shoes'') to optimize manual labeling throughput. Directly optimizing open-vocabulary heads on such fuzzy semantic anchors introduces substantial cross-modal gradient contradictions. To resolve this, we leverage a lightweight yet efficient Qwen3-VL-2B model~\cite{bai2025qwen3} to perform visual semantic disambiguation. 

Conditioned on the original image, target bounding box coordinates $[x_{\min}, y_{\min}, x_{\max}, y_{\max}]$, and a specialized \textit{Category Determination Prompt}, the model dynamically inspects the localized region to identify the precise subordinate taxonomy. For instance, as visualized in Fig.~\ref{fig:o365pipeline}, the coarse tag ``Cleaning Products'' is accurately mapping onto the granular classification ``Bath Towel''. 

Critically, Multimodal Large Language Models (MLLMs) inherently suffer from performance degradation and low classification accuracy when processing severely downscaled visual patches. To safeguard our pipeline against small-object hallucinations, we instantiate a rigid spatial filtering threshold based on the relative bounding box coverage:
\begin{equation}
\gamma = \frac{\text{Area}_{\text{bbox}}}{\text{Area}_{\text{image}}} = \frac{(x_{\max} - x_{\min}) \times (y_{\max} - y_{\min})}{H_{\text{image}} \times W_{\text{image}}}
\label{eq:area_ratio}
\end{equation}
For any target instance where $\gamma < 0.05\%$, the semantic refinement stage is bypassed, and the pipeline safely falls back to the original coarse label. This defensive engineering constraint protects data cleanliness but structurally explains the conservative unique caption ratio (UCR) observed in the small-object slices of the final corpus.

\textbf{Stage 2: Context-Aware Caption Generation}
Once the precise taxonomy is anchored, the instance is forwarded to a powerful Qwen3-VL-32B model tasked with dense language synthesis. This stage introduces a dedicated \textit{Caption Generation Prompt} that instructs the model to construct a rich, continuous referring expression by combining three essential dimensions: (1) the purified fine-grained category name from Stage 1, (2) localized visual attributes such as color, material, and state, and (3) relative spatial dynamics with surrounding objects. 

The model outputs highly descriptive, context-aware phrases, successfully upgrading a simple noun phrase into an open-vocabulary grounding target (e.g., transforming ``Bath Towel'' into ``The white bath towel which hangs loosely over the metal railing to the woman's left''). This extensive linguistic compositionality injects the exact high-entropy variations required to prevent the task head's gradient field from collapsing into low-rank sub-manifolds.

\textbf{Stage 3: Cross-Lingual Extension via Translation}
To scale the capacity of our generalist grounding architecture across global linguistic boundaries, the generated English captions undergo an automated multi-lingual expansion. We employ the advanced Tencent HY-MT-1.5-7B translation engine~\cite{zheng2025hy} to convert the synchronized object-level descriptions into target languages, such as Chinese, while preserving exact syntactic mapping and spatial indices. This step transforms the corpus into a parallelized cross-lingual visual grounding benchmark, ensuring the mACH alignment weights remain robust under diverse multi-lingual prompt distributions without requiring test-time structural adaptation.

\textbf{Post-Processing Quality Control}
Following the three-stage generation, a non-parametric quality filter is executed to strip away textual noise. We discard any generated descriptions that contain dangling pronouns or fail to exceed a baseline length of three tokens. Through this data-model co-design pipeline, \texttt{O365-Caption} establishes a robust, highly compositional, and structurally clean data asset comprising 10.0M precise spatial-textual alignment targets.


\subsection{Prompt Specifications}
\begin{tcolorbox}[
    width=\textwidth,
    colback=white,       
    colframe=blue!50!gray,      
    coltitle=white,             
    title=\textbf{Stage 1: Category Determination Prompt},
    fonttitle=\sffamily\small,
    fontupper=\small,
    arc=4mm,                    
    boxrule=0.5mm,              
    left=6mm, right=6mm, top=4mm, bottom=4mm 
]
\textbf{\large Role}

\vspace{1mm}
You are a fine-grained visual classifier and semantic disambiguation assistant, specialized in identifying the exact sub-category of an object within a designated bounding box.

\vspace{3mm}
\textbf{\large Inputs}
\begin{itemize}
    \item \textbf{Image:} The full image.
    \item \textbf{Bounding Box:} [x1, y1, x2, y2]
    \item \textbf{Coarse Category Label:} The initial broad category name from Objects365 (e.g., "Cleaning Products", "Other Shoes", "Vehicle").
\end{itemize}

\vspace{3mm}
\textbf{\large Core Task}

\vspace{1mm}
Examine the region defined by the bounding box. Perform semantic disambiguation and fine-grained refinement on the provided Coarse Category Label based on the visual context. Determine the precise, specific name of the target object (e.g., refine "Cleaning Products" to "Bath Towel", "Other Shoes" to "Boots", or "Vehicle" to "SUV").

\vspace{3mm}
\textbf{\large Output Rules (Strictly Enforced)}
\begin{enumerate}
    \item Return ONLY the refined, fine-grained category name (a single noun or short noun phrase).
    \item Do NOT include any quantifiers, articles ("a", "an", "the"), descriptive adjectives, punctuation, or conversational explanations.
    \item If the object is too small, blurry, or heavily obscured to be further specified, return the original [Coarse Category Label] exactly.
\end{enumerate}

\vspace{3mm}
\textbf{\large Examples}
\begin{itemize}
    \item \textbf{Input:} Box=[451, 479, 586, 510], Coarse Category=Cleaning Products\\
          \textbf{Output:} Bath Towel
    \item \textbf{Input:} Box=[100, 200, 150, 250], Coarse Category=Other Shoes\\
          \textbf{Output:} Boots
    \item \textbf{Input:} Box=[50, 50, 70, 70], Coarse Category=Bird (Too small to specify the species)\\
          \textbf{Output:} Bird
\end{itemize}
\end{tcolorbox}

\vspace{4mm} 

\begin{tcolorbox}[
    width=\textwidth,
    colback=white,       
    colframe=blue!50!gray,      
    coltitle=white,             
    title=\textbf{Stage 2: Caption Generation Prompt},
    fonttitle=\sffamily\small,
    fontupper=\small,
    arc=4mm,                    
    boxrule=0.5mm,              
    left=6mm, right=6mm, top=4mm, bottom=4mm 
]
\textbf{\large Role}

\vspace{1mm}
You are a precise visual grounding description assistant, specialized in generating standardized, open-vocabulary phrases for specific target objects within a designated bounding box.

\vspace{3mm}
\textbf{\large Inputs}
\begin{itemize}
    \item \textbf{Image:} The full image.
    \item \textbf{Bounding Box:} [x1, y1, x2, y2]
    \item \textbf{Refined Target Category:} The precise category name purified from Stage 1 (e.g., "SUV", "Bath Towel", "Tree").
\end{itemize}

\vspace{3mm}
\textbf{\large Core Task}
\begin{enumerate}
    \item \textbf{Attribute and Context Extraction:} Observe the target within the bounding box, capturing its salient visual attributes (color, material, state) and its spatial relationships relative to the surrounding environment.
    \item \textbf{Instance Counting:} Examine the bounding box region carefully to determine the density or distribution of the specified target category.
\end{enumerate}

\vspace{3mm}
\textbf{\large Output Rules (Strictly Enforced)}
\begin{enumerate}
    \item \textbf{Format Specification:}
    \begin{itemize}
        \item Output ONLY a single, continuous descriptive noun phrase in the format: "[Quantifier] + [Descriptive Attributes/Context] + [Refined Target Category]".
        \item Do NOT include any punctuation (periods, quotation marks), line breaks, or conversational explanations.
        \item \textbf{Correct:} a red SUV, several roadside green trees
        \item \textbf{Incorrect:} A red umbrella. (Contains a period), There are three umbrellas in this box (Contains explanation)
    \end{itemize}
    
    \item \textbf{Quantifier Constraints:}
    \begin{itemize}
        \item If Count = 1: Must use a standard singular quantifier (e.g., "a", "an", "the", or "one").
        \item If Count > 1: \textbf{Never use exact numbers}. You must use indefinite or vague plural quantifiers (e.g., "multiple", "several", "a row of", "a group of").
    \end{itemize}
    
    \item \textbf{Spatial Context and Partial Cropping:}
    \begin{itemize}
        \item If the box contains only a \textit{subset} of a group of identical items (e.g., framing only the 2 leftmost cars in a long row): The description must incorporate relative positioning, e.g., "the leftmost white sedans".
        \item If the box encompasses the \textit{entirety} of the visible group: Describe only the attributes and vague quantity, e.g., "several white sedans".
    \end{itemize}
    
    \item \textbf{Uncertainty Fallback:}
    \begin{itemize}
        \item If details or counts are severely obscured, fallback to the vague plural quantifier "multiple" and retain the base [Refined Target Category] name exactly.
    \end{itemize}
\end{enumerate}

\vspace{3mm}
\textbf{\large Examples}
\begin{itemize}
    \item \textbf{Input:} Box=[100,100,300,300], Refined Target=SUV, 1 red SUV inside\\
          \textbf{Output:} a red SUV
    \item \textbf{Input:} Box=[200,200,800,400], Refined Target=Tree, a row of 5 trees inside\\
          \textbf{Output:} several roadside green trees
    \item \textbf{Input:} Box=[0,0,200,200], Refined Target=Apple, a countless pile of red apples inside\\
          \textbf{Output:} multiple red apples
\end{itemize}
\end{tcolorbox}

\subsection{Detailed Linguistic and Statistical Properties of O365-Caption}
\label{subsec:linguistic_properties}

To provide a transparent, macro-level evaluation of the proposed \texttt{O365-Caption} dataset, we conduct a comprehensive statistical and linguistic analysis across four core dimensions: vocabulary scaling behaviors, sentence length distributions, lexical growth patterns, and part-of-speech (POS) compositions. We benchmark our corpus against three prominent grounding and phrase alignment datasets: \texttt{MixGrounding}, \texttt{flickr30k}, and \texttt{RefCOCOg}. The aggregated empirical results are visualized in Fig.~\ref{fig:linguistic_landscape}.

\begin{figure}[t]
    \centering
    \includegraphics[width=0.7\textwidth]{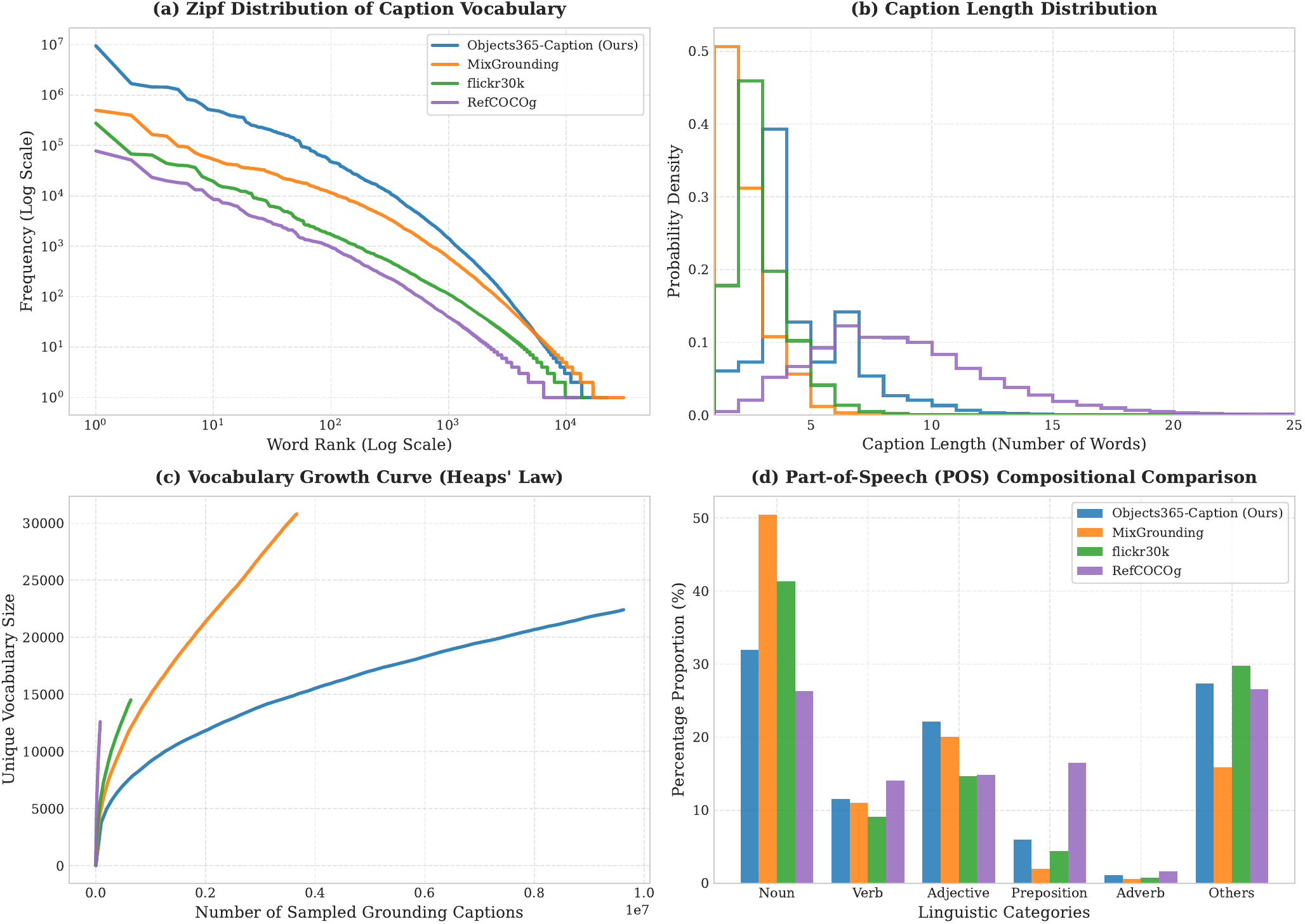}
    \caption{\textbf{Comprehensive linguistic and statistical landscape of O365-Caption compared with prominent baselines.}(An enlarged detailed view of \emph{Fig.~3(b)} from the main text.) (a) Zipf's law distribution tracking absolute word frequencies against vocabulary ranks. (b) Probability density estimation of caption length (word count per expression). (c) Unique vocabulary growth curves as a function of sampled grounding captions (Heaps' Law). (d) Relative percentage composition of granular part-of-speech (POS) tags calculated via the NLTK parser\cite{loper2002nltk}. Zoom in for best view.}
    \label{fig:linguistic_landscape}
\end{figure}


By analyzing the four-quadrant distribution in Fig.~\ref{fig:linguistic_landscape}, we derive the following critical observations regarding the compositionality and scaling capacity of \texttt{O365-Caption}:

\begin{itemize}
    \item \textbf{Zipfian Vocabulary Distribution (Fig.~\ref{fig:linguistic_landscape}a):} In the log-log frequency-rank plane, all examined corpora conform to standard power-law scaling\cite{zipf2016human}. Crucially, our \texttt{O365-Caption} (blue curve) dominates the vertical frequency spectrum, exceeding comparative baselines by $1$ to $2$ orders of magnitude with an absolute peak frequency approaching $10^7$. Moreover, its tail extends continuously toward the high-rank open-vocabulary domain without sharp truncation, proving that our dataset effectively injects massive scaling volume while maintaining a structurally healthy long-tail vocabulary distribution.
    
    \item \textbf{Density Alignment of Caption Lengths (Fig.~\ref{fig:linguistic_landscape}b):} The probability density estimation unveils sharp topological discrepancies in expression length. While \texttt{MixGrounding} collapses into a sparse noun-phrase peak ($1\sim2$ words) and \texttt{RefCOCOg} exhibits an over-dispersed, redundant long-tail stretching beyond $15$ words, \texttt{O365-Caption} exhibits an idealized, tightly bound bell-shaped distribution peaking precisely at $4$ words. This structure ensures high linguistic density and syntactic conciseness, filtering out irrelevant narrative noise while supplying optimal token-level cross-modal information for standard grounding detection heads.
    
    \item \textbf{Sub-linear Vocabulary Growth (Fig.~\ref{fig:linguistic_landscape}c):} Evaluating through the lens of Heaps' Law ($V = K \cdot N^\beta$)\cite{heaps1978information}, our dataset provides an unprecedented horizontal data span, scale-stretching to over $1.0 \times 10^7$ training descriptions. Although the unique vocabulary size displays a natural sub-linear asymptotic behavior as it approaches a saturation bound of over $22,500$ unique tokens, the growth trajectory retains a steady, upward-sloping derivative. This persistent elasticity serves as a critical data-side safeguard, guaranteeing that the model continually encounters novel syntactic patterns during large-scale pre-training, thereby fundamentally mitigating representation collapse.
    
    \item \textbf{Deconstruction of Noun Monopoly via POS Matrix (Fig.~\ref{fig:linguistic_landscape}d):} The part-of-speech composition reveals the deeper grammatical diversity of our corpus. Traditional automated visual-grounding pairs suffer heavily from a \textit{noun monopoly}, where rigid object categories dominate the text distribution (e.g., \texttt{MixGrounding} exhibits over $50\%$ noun density). In contrast, \texttt{O365-Caption} successfully dilutes the noun ratio to a balanced $32\%$, while significantly lifting the proportion of \textbf{Adjectives ($22\%$)} and maintaining structural distributions of \textbf{Verbs} and \textbf{Prepositions}. This high density of modifiers and spatial markers empirically verifies that our dataset is richly populated with fine-grained visual attributes and target-context relational descriptors, unlocking robust open-vocabulary grounding capability.
\end{itemize}

\subsection{Qualitative Examples of O365-Caption}
\label{app:caption_examples}

Figure~\ref{fig:o365_caption_examples} presents representative annotations from O365-Caption. Unlike conventional grounding datasets that typically associate each object with a short noun phrase, \texttt{O365-Caption} substantially enriches the linguistic diversity while preserving the original Objects365 bounding-box annotations.

\begin{figure}[!t]
    \centering
    \begin{subfigure}[t]{1\textwidth}
        \centering
        \includegraphics[width=\linewidth]{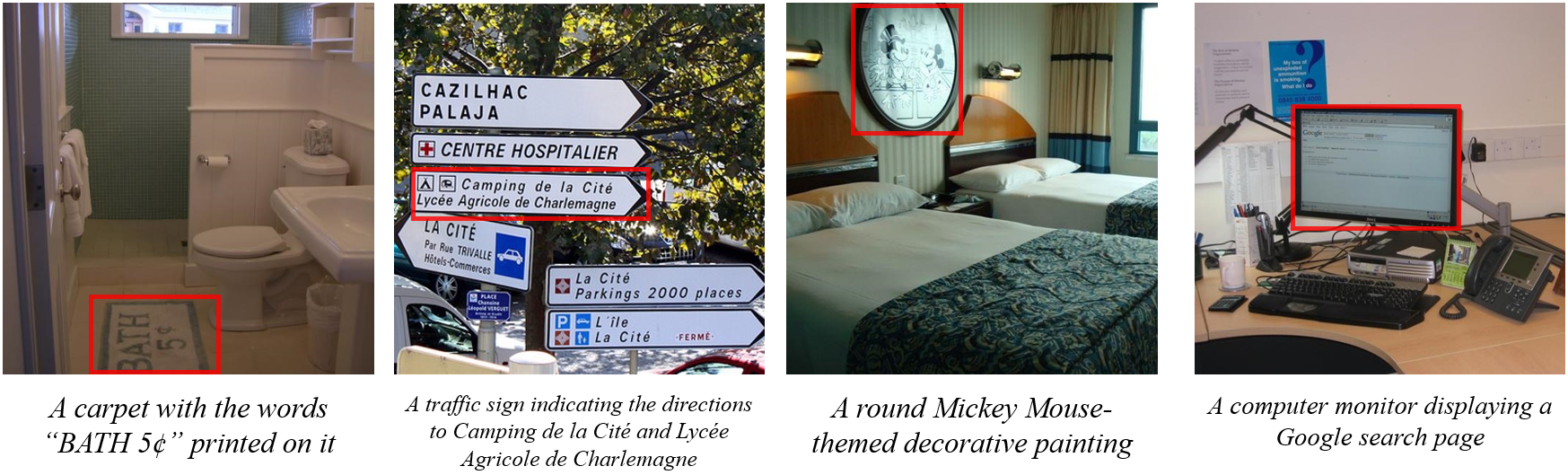}
        \caption{Examples containing scene text, multilingual expressions, and OCR-related grounding targets.}
        \label{fig:visuala}
    \end{subfigure}
    
    \begin{subfigure}[t]{1\textwidth}
        \centering
        \includegraphics[width=\linewidth]{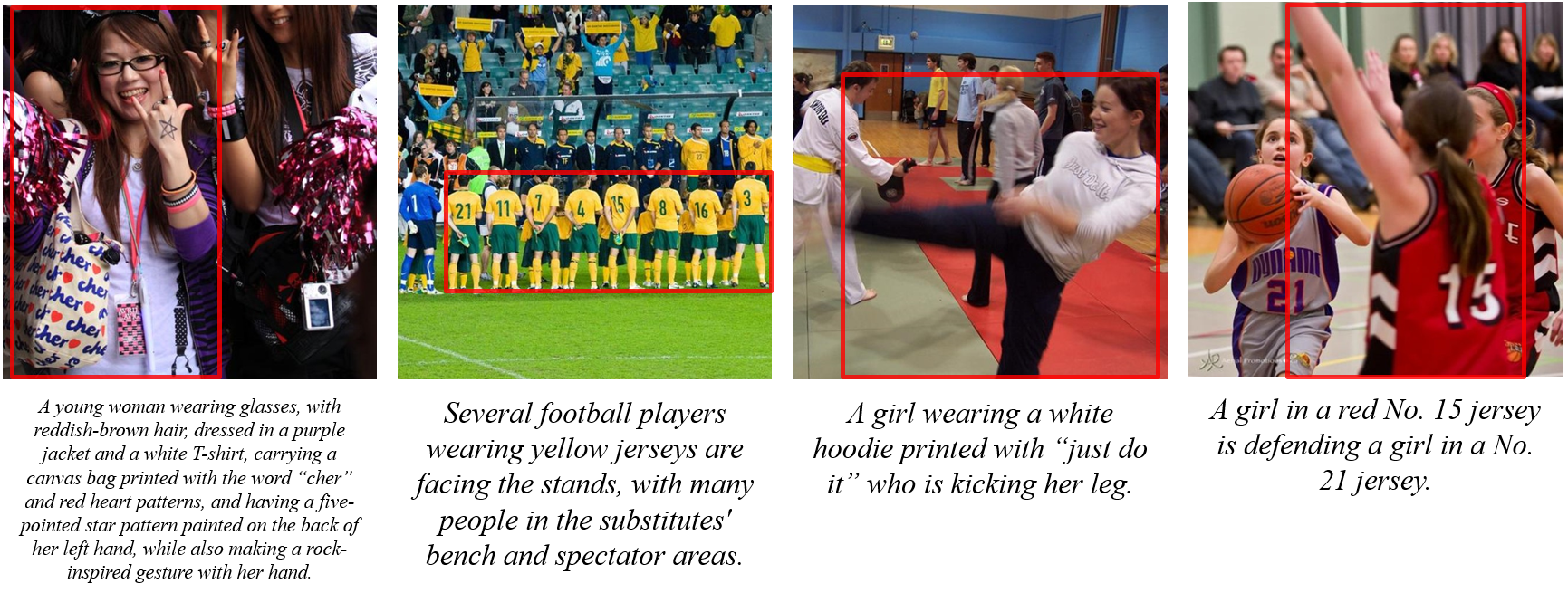}
        \caption{Examples illustrating long-form descriptions, attribute composition, and multi-object reasoning.}
        \label{fig:visualb}
    \end{subfigure}
    \caption{Representative annotation examples from O365-Caption.
    Compared with conventional grounding datasets, our annotations contain richer compositional descriptions, long-form referring expressions, multi-object reasoning, and naturally preserved scene text (including multilingual and non-ASCII content), while retaining the original Objects365 bounding-box annotations.}
    \label{fig:o365_caption_examples}
\end{figure}

Specifically, the generated expressions exhibit several desirable properties:

\begin{itemize}
    \item \textbf{Rich semantic descriptions.}
    Objects are described using complete natural language expressions instead of isolated category names, providing substantially stronger compositional supervision.

    \item \textbf{Long-form referring expressions.}
    Many annotations include appearance, attributes, actions, and surrounding context, producing significantly longer descriptions than existing REC datasets.

    \item \textbf{Multi-object reasoning.}
    Expressions frequently describe groups of objects (e.g., ``Several football players wearing yellow jerseys...'') rather than a single isolated instance, encouraging relational understanding.

    \item \textbf{Scene text and multilingual content.}
    \texttt{O365-Caption} naturally preserves textual content appearing in images, including signage, trademarks, and non-English words (e.g., ``Camping de la Cité''), which introduces numerous non-ASCII tokens (e.g. ``5\textcent'') rarely covered by previous grounding datasets.
\end{itemize}

These examples illustrate the substantially richer linguistic supervision provided by \texttt{O365-Caption}, complementing the quantitative vocabulary statistics presented in Section Methodology.

\section*{Appendix D: Annotation Refinement for Existing Grounding Datasets}
\label{app:annotation_refinement}

Although the primary objective of our annotation pipeline is to construct the proposed \texttt{O365-Caption} dataset, the pipeline itself is general and can also be applied to improve existing automatically constructed grounding corpora. To investigate this capability, we analyze the widely used MixedGrounding (GoldG) dataset, which serves as the pre-training corpus for several open-vocabulary grounding models.

Unlike human-written referring expressions, MixedGrounding is assembled by automatically merging heterogeneous vision-language datasets. Consequently, we observe that a considerable portion of its annotations are semantically incomplete or linguistically ambiguous. Typical examples include isolated pronouns (e.g., ``he'', ``she'', ``his'', ``they''), standalone attributes (e.g., ``blue'', ``white'', ``red''), truncated noun phrases (e.g., ``back'', ``side'', ``section''), generic object references (e.g., ``object'', ``item'', ``someone''), and malformed text fragments (e.g., ``wh'', ``aprt'', ``are t''). Although these expressions may correspond to valid image regions, they generally fail to uniquely identify an object and therefore provide weak supervision for referring expression grounding.

To quantify this phenomenon, Table~\ref{tab:noisy_caption_statistics} summarizes representative noisy grounding expressions that appear more than 100 times in MixedGrounding. The high frequency of these low-information annotations indicates that the problem is systematic rather than occasional.

Our annotation pipeline automatically detects such low-quality expressions and reconstructs them into complete, context-aware referring descriptions while preserving the original bounding-box annotations. Specifically, the repaired captions enrich the original annotations with explicit object semantics, visual attributes, and contextual cues, substantially increasing their linguistic informativeness without modifying the associated localization labels. 

These observations suggest that our annotation pipeline is not limited to constructing \texttt{O365-Caption}. More generally, it serves as a scalable annotation refinement framework that can systematically improve the linguistic quality of existing grounding datasets while maintaining their original spatial supervision.

\begin{table}[t]
\centering
\caption{Representative low-information grounding expressions occurring more than 100 times in MixedGrounding (GoldG).}
\label{tab:noisy_caption_statistics}
\begin{tabular}{ll}
\toprule
Category & Examples \\
\midrule
Pronouns &
he, she, his, him, they \\
Attributes &
blue, white, red, gray, silver \\
Incomplete phrases &
back, side, section, end, view \\
Generic references &
object, item, someone, lot \\
Malformed fragments &
wh, aprt, are t \\
\bottomrule
\end{tabular}
\end{table}

\section*{Appendix E: Implementation and Hyperparameter Details}
\label{sec:appendix_hyperparameters}

To ensure complete empirical transparency and guarantee absolute reproducibility, we provide the exhaustive configuration profiling, optimization strategies, and hardware deployment metrics utilized across our entire pipeline. The training execution is managed via a centralized hyperparameter protocol and parallelized across a high-capacity distributed clusters. The precise operational layout across both CNN-based and DETR-based variants is structured in Tab.~\ref{tab:hyperparameter_specifications}.

\begin{table}[htbp]
\centering
\caption{\textbf{Comprehensive hyperparameter and training configuration layout.} This unified profile governs both the default generalist training protocol and benchmark-specific downstream adaptation variants across different architectural backbones.}
\label{tab:hyperparameter_specifications}
\resizebox{\linewidth}{!}{%
\begin{tabular}{ll|ll}
\toprule
\textbf{Configuration Layer} & \textbf{Hyperparameter Item} & \textbf{Default Operational Value} & \textbf{Hardware / Library Anchor} \\
\midrule
\textit{(a) Infrastructure \& Base} 
& Model Identifier & \texttt{CNN} / \texttt{DETR} & PyTorch Pipeline Framework \\
& Visual Input Resolution & $640 \times 640$ pixels & Downstream Edge-friendly Deployment \\
& Total Pre-training Volume & 30 epochs & Training Iteration Boundary \\
& Base Learning Rate ($\eta$) & $2.0 \times 10^{-3}$ (CNN) / $1.0 \times 10^{-4}$ (DETR) & Linear Scaling Topology Rule \\
& Weight Decay Metric & 0.025 (CNN) / $1.0 \times 10^{-4}$ (DETR) & $L_2$ Regularization Boundary \\
& Learning Rate Schedule & Cosine Annealing (CNN) / Step Decay (DETR) & Optimization Convergence Path \\
& Warmup Optimization Duration & 3 epochs / 1 epoch & Linear Learning Rate Step Ascent \\
& Mixed-Precision Setting & Floating-Point 16 (AMP) & \texttt{torch.cuda.amp} Acceleration \\
\midrule
\textit{(b) Hardware, System \& Cache}
& Pre-training Hardware Profile & $8 \times$ NVIDIA RTX PRO 6000 (96GB PCIe) GPUs & Distributed Cluster Topology \\
& Data Cache Architecture & Serialization Pickle & Persistent Meta-storage Disk Cache \\
& Per-GPU Batch Capacity & 16 samples & Micro-batch Dimension Boundary \\
& Dataloader Throughput & 4 workers per GPU & Multi-threaded Asynchronous Prefetching \\
\midrule
\textit{(c) Model \& Representation}
& Visual Backbone Scale & [Tiny-Small-Base] & DINOv3 Self-Supervised Initialization \\
& Primary Language Encoder & Frozen \texttt{Qwen3-VL-Embedding-2B} & Pristine Multilingual Latent Space Anchor \\
& Hidden Latent Task Dimension & [768-768-1024] channels & Multi-modal Cross-Attention Gating Map \\
& InfoNCE Categorical Batch Constraints & $N_{\mathrm{class}} = 20$ blocks & Contrastive Alignment Gating Vector \\
\midrule
\textit{(d) Deployment \& Inference Efficiency}
& Evaluation Inference Hardware & Single NVIDIA V100 (32GB) GPU & Standardized Evaluation Node \\
& Latency (Fixed Text / Cached Embeddings) & \textbf{33.2 ms} / image & Multi-query Category Decoupling Pass \\
& Latency (Dynamic Text / On-the-fly) & \textbf{140.5 ms} / image & Real-time End-to-end REC Tracking \\
& Peak Training VRAM Footprint & $\sim$90.3 GB per GPU & Memory-capped Scalable Clustering \\
& Peak Inference VRAM Footprint & $\sim$1.4 GB & Low-resource Edge Node Constraint \\
\bottomrule
\end{tabular}
}
\end{table}

\paragraph{Optimization and Convergence Controls.} 
All hyperparameters are frozen deterministically across the cross-dataset evaluation matrix to validate generalist transfer stability. The optimization trajectory utilizes the AdamW optimizer paired with variant-specific decay behaviors, under standard numerical coefficients ($\beta_1=0.9, \beta_2=0.95, \epsilon=10^{-8}$). Specifically, the CNN-based variant is optimized with an initial learning rate of $2.0 \times 10^{-3}$ and weight decay of 0.025 regulated by a cosine annealing scheduler with a 3-epoch linear warmup. Concurrently, the DETR-based variant adopts an initial learning rate of $1.0 \times 10^{-4}$ and weight decay of $1.0 \times 10^{-4}$, governed by a step decay schedule activated at 80\% and 90\% of the total milestone, utilizing a 1-epoch warmup duration to handle Transformer-based structural updates. To support the multi-modal text-conditioned dense representations during pre-training, the model maximizes the memory capacity of an 8-GPU NVIDIA RTX PRO 6000 cluster, maintaining a stable $\sim$90.3 GB VRAM allocation per node.

\paragraph{Decoupled Latency Analysis.}
The visual encoder is initialized using the self-supervised \texttt{DINOv3-ConvNeXt-Tiny} weights pre-trained on the LVD-1689M corpus, while the language encoder routes representations from the pristine multilingual \texttt{Qwen3-VL-Embedding-2B} repository. Crucially, as profiled in Tab.~\ref{tab:hyperparameter_specifications}(d), our framework exhibits distinctive operational advantages under varying deployment conditions when benchmarked on a standard NVIDIA V100 (32GB) evaluation GPU. 

When evaluated under the \textit{Dynamic Text} setting (standard in sequential online REC tasks), the language encoder must be invoked on-the-fly for each text query, yielding an end-to-end processing latency of \textbf{140.5 ms} per image. This empirical baseline highlights the massive computational bottleneck imposed by running large-scale linguistic models forward passes in real-time tracking loops. In sharp contrast, under the \textit{Fixed Text} configuration (standard in vocabulary-fixed open-vocabulary detection settings), the high-capacity linguistic representations of the frozen text encoder are pre-cached. This entirely bypasses the LLM extraction overhead during runtime, allowing the decoupled visual stream and the broadcast-based \texttt{mACH} layer to process the image frame in only \textbf{33.2 ms} (saving $\sim$76.3\% computation time). This dramatic latency reduction combined with a lightweight inference VRAM footprint of only $\sim$3.4 GB empirically validates the framework's architecture-agnostic suitability for high-throughput, edge-deployable real-time visual grounding applications.

\section*{Appendix F: Limitations}
\label{app:limitations}

Our theoretical analysis characterizes the representational properties of the learned visual features rather than end-task grounding accuracy. Specifically, the proposed notion of directional alignment capacity explains which semantic directions remain available for vision-language alignment, but does not account for optimization dynamics, supervision quality, or score calibration, all of which also affect empirical performance.

Furthermore, while the dual-stream objective eliminates alignment-blind directions and preserves representation diversity, it does not explicitly address confidence calibration under distribution shift. Consequently, out-of-distribution objects may still receive conservative (low-confidence) predictions despite retaining sufficient representational capacity for alignment. Addressing calibration and open-set confidence estimation remains an important direction for future work.

\end{document}